\documentclass[twoside]{article}

\usepackage[preprint]{aistats2023}
\usepackage{commands}
%
%




\newtheorem{definition}{Definition}
\newtheorem{theorem}{Theorem}

\begin{document}

%

%
\runningauthor{Stirn, Wessels, Schertzer, Pereira, Sanjana, Knowles}

\twocolumn[

\aistatstitle{Faithful Heteroscedastic Regression with Neural Networks}

\aistatsauthor{Andrew Stirn \And Hans-Hermann Wessels \And Megan Schertzer}
\aistatsaddress{Columbia University (CU) \And  New York Genome Center (NYGC) \And CU \& NYGC}

\aistatsauthor{Laura Pereira \And Neville E. Sanjana \And David A. Knowles}
\aistatsaddress{NYGC \And  New York University \& NYGC  \And CU \& NYGC}

]

\begin{abstract}

Heteroscedastic regression models a Gaussian variable's mean and variance as a function of covariates.
Parametric methods that employ neural networks for these parameter maps can capture complex relationships in the data.
Yet, optimizing network parameters via log likelihood gradients can yield suboptimal mean and uncalibrated variance estimates.
Current solutions side-step this optimization problem with surrogate objectives or Bayesian treatments.
Instead, we make two simple modifications to optimization.
Notably, their combination produces a heteroscedastic model with mean estimates that are provably as accurate as those from its homoscedastic counterpart (i.e.~fitting the mean under squared error loss).
For a wide variety of network and task complexities, we find that mean estimates from existing heteroscedastic solutions can be significantly less accurate than those from an equivalently expressive mean-only model.
Our approach provably retains the accuracy of an equally flexible mean-only model while also offering best-in-class variance calibration.
Lastly, we show how to leverage our method to recover the underlying heteroscedastic noise variance.

\end{abstract}


\section{INTRODUCTION}\label{sec:introduction}

Model uncertainty can be categorized as epistemic or aleatoric~\citep{der2009aleatory}.
Epistemic uncertainty is that which gathering more data or refining models can reduce.
For example, uncertainty in model parameters due to lacking data is epistemic.
Aleatoric uncertainty is that which more data or model refinement cannot reduce.
Measurement noise is a source of aleatoric uncertainty--growing a noisy dataset will eventually saturate model performance.
Ideally, a model's predictive variance accurately captures both epistemic and aleatoric uncertainty;
doing so enables active learning~\citep{gal2017deep}, reinforcement learning~\citep{osband2016deep,chua2018deep,yu2020mopo}, and identification of uncertain predictions.
For example, a biologist that uses a model to design high efficacy CRISPR guides might only consider those with low predictive uncertainty.

In regression, noise variance comes in two flavors.
Homoscedastic noise variance is constant across all data and can be represented with a fixed global parameter.
A model that only learns a mapping from covariates onto the response variable's mean via minimizing the sum of squared errors is implicitly homoscedastic.
Heteroscedastic noise variance changes w.r.t.~to covariates.
Regression models can capture heteroscedasticity in the data by learning a function from covariates to variance in addition to the function from covariates to the mean.

Mapping covariates onto the parameter space of a heteroscedastic Gaussian with neural networks and minimizing the negative log likelihood (NLL)~\citep{nix1994estimating} is the de-facto method underlying many modern approaches to predictive uncertainty estimation in regression.
\Citet{kendall2017uncertainties} use Monte Carlo dropout~\citep{gal2016dropout} to accumulate parameter estimates across sampled dropouts such that their
empirical distribution better captures predictive uncertainty.
\Citet{lakshminarayanan2017simple} do the same, but accumulate parameter estimates from an ensemble of models instead.
Despite its prevalence, minimizing the NLL of a heteroscedastic Gaussian has many pitfalls.
\Citet{takahashi2018student} identify optimization instabilities that occur when variance is driven towards zero for data whose mean estimates have near-zero error.
\Citet{skafte2019reliable} find that Monte Carlo and ensemble approaches can underestimate the true variance.
\Citet{seitzer2022on} show heteroscedastic variance estimation can degrade predictive mean performance.

\paragraph{Summary of Contributions.}

Consider a heteroscedastic neural network parameterizing a response variable's mean and covariance.
Eliminating all computations in the computational graph that are not ancestors of the mean output yields a mean-only model (a subnetwork of the original heteroscedastic model).
We propose using this mean-only model as the baseline for assessing the heteroscedastic model's mean estimates since both models have identical expressive power for mean estimation.
We label any heteroscedastic model optimized separately from its mean-only baseline as `faithful' if both models have equally accurate mean estimates.
`Faithfulness' requires that adding a covariance output head to a mean-only network and retraining it should not worsen its mean estimates.
We formalize this notion of `faithfulness' in \cref{def:faithful}, \cref{sec:methods}.

We propose two modifications to a heteroscedastic Gaussian model's NLL minimization that both save computations and are incredibly easy to implement.
Notably, their combination guarantees a `faithful' heteroscedastic model with mean estimates that are equivalent to its mean-only baseline's.
We experimentally confirm this claim and show our method achieves best-in-class variance calibration.

Current heteroscedastic methods combine aleatoric and epistemic uncertainty~\citep{kendall2017uncertainties,lakshminarayanan2017simple} in the predictive variance.
Our method offers reliable mean and variance estimates regardless of the true noise variance, which, for certain datasets, we can leverage to accurately decompose predictive variance into its aleatoric and epistemic components.
Accurately isolating aleatoric uncertainty enables scientists to study heteroscedasticity in observed systems.


\section{METHODS}\label{sec:methods}

\paragraph{Preliminaries.}
Homoscedastic regression models response variable $Y|X=x \sim \N(\mu(x), \Sigma)$.
Mean function $\mu(\cdot)$ operates on covariates $x$, while the (co)variance parameter has no dependence on $x$.
Heteroscedastic regressions models $Y|X=x \sim \N(\mu(x), \Sigma(x))$,
which establishes (co)variance as a function of covariates $\Sigma(x)$.
Homoscedasticity is often a convenient rather than accurate assumption.
Heteroscedastic models can capture homoscedasticity if $\Sigma(\cdot)$ learns a constant output that matches the true global (co)variance.
For brevity, we will use variance and covariance interchangeably henceforth.

\Citet{nix1994estimating} propose neural networks as the functional class for the mean and variance function(s);
this provides remarkable flexibility to capture complex relationships in the data.
To fit the mean and variance network(s) under their proposal, one minimizes the NLL, $\LL \equiv$
\begin{sizeddisplay}{\small}
\begin{align*}
    \frac{1}{2} \sum_{(x,y)\in\D} \log|2\pi\Sigma(x)| + \big(y - \mu(x)\big)^T\big(\Sigma(x)\big)^{-1}\big(y - \mu(x)\big).
\end{align*}
\end{sizeddisplay}
When $\mu(\cdot)$ and $\Sigma(\cdot)$ are from separate network classes
\begin{align}
    \mu(\cdot) &\in \F_\mu: (\X,\Theta_\mu) \rightarrow \Re^{\dim(\Y)} \label{eq:mean-func}\\
    \Sigma(\cdot) &\in \F_\Sigma: (\X,\Theta_\Sigma) \rightarrow \SM_{++}^{\dim(\Y) \times \dim(\Y)} \nonumber,
\end{align}
one can separately optimize network parameters $\theta_\mu$ and $\theta_\Sigma$ by computing gradients $\nabla_{\theta_\mu} \LL$ and $\nabla_{\theta_\Sigma} \LL$ and using stochastic gradient descent or, like we do, an adaptive gradient algorithm such as Adam~\citep{kingma2014adam}.
A more general treatment assumes a single network class
\begin{sizeddisplay}{\small}
\begin{align}
    (\mu,\Sigma)(\cdot) &\in \G: (\X,\Theta) \rightarrow (\Re^{\dim(\Y)}, \SM_{++}^{\dim(\Y) \times \dim(\Y)}) \label{eq:comb-func}.
\end{align}
\end{sizeddisplay}
Whenever $\Theta_\mu \cap \Theta_\Sigma = \emptyset$, $\mu(\cdot)$ and $\Sigma(\cdot)$ are separate networks.
Conversely, $\Theta_\mu \cap \Theta_\Sigma \neq \emptyset$, implies $\mu(\cdot)$ and $\Sigma(\cdot)$ share ancestral computations in $\G$'s computational graph that involve shared parameter space $\Theta_\mu \cap \Theta_\Sigma$.

\paragraph{The Problem.} Without loss of generality, consider a univariate response ($\dim(Y)=1$) where $\sigma^2(\cdot)$ is a neural network parameterizing a scalar variance.
Then, the gradients of the NLL w.r.t.~the parameterizing function outputs are
\begin{align}
    \nabla_{\mu(x)} \LL &= \sum_{(x,y)\in\D} \frac{\mu(x) - y}{\sigma^2(x)} \label{eq:grad-mean}\\
    \nabla_{\sigma^2(x)} \LL &= \sum_{(x,y)\in\D} \frac{\sigma^2(x) - (y - \mu(x))^2}{2(\sigma^2(x))^2} \label{eq:grad-variance}.
\end{align}
\Citet{skafte2019reliable} recognize the $\frac{1}{\sigma^2(x)}$ scaling in both gradients quickens learning for low-variance points and thus biases performance towards regions of data with low noise.
\Citet{seitzer2022on} further recognize that learning is biased not just against data with higher true variance but also against points whose mean predictions are poor.
Here, a model may use high variance (regardless of the true variance) to explain poor mean estimates instead of improving them;
this creates a `rich-get-richer' dynamic, where points with lower predictive variance continuously provide the largest learning signal.
Please see \citet{seitzer2022on} for a wonderful exposition of this phenomena.

\paragraph{Assessing Mean Estimation.}
To fairly assess mean estimation accuracy of a heteroscedastic model from $\G$, we propose constructing a mean-only baseline from $\F_\mu$ satisfying \cref{def:mean-only}.
\begin{definition}\label{def:mean-only}
    For a heteroscedastic model from network class $\G$ with parameter space $\Theta$ (\cref{eq:comb-func}), let $\F_\mu$ with parameter space $\Theta_\mu$ be the computational subgraph of $\G$ that removes all computational nodes that are not ancestors of the mean output.
    Clearly then, $\F_\mu \subset \G$ and $\Theta_\mu \subset \Theta$.
\end{definition}
By construction, $\G$ and $\F_\mu$ are equivalently powerful at mean estimation.
The decision to use a mean-only model from $\F_\mu$ as a baseline is well motivated for a couple of reasons.
First, prevailing evidence suggests the additional fitting of variance is to blame for poor predictive performance when simultaneously fitting the mean and variance functions of a heteroscedastic model via gradient-based NLL minimization~\citep{takahashi2018student,skafte2019reliable,seitzer2022on}.
Second, mean-only models that minimize the sum of squared errors (SSE) are equivalently homoscedastic with isotropic unit covariance ($\Sigma=I$) and thus cannot get stuck in the cycle of using high local variance to explain poor mean estimates since they equally scale gradients by the inverted global covariance $\Sigma^{-1}=I$.

Mean square error (MSE) or root MSE (RMSE) are common assessments of mean estimation accuracy.
We propose a desideratum for a heteroscedastic model is that its MSE is no worse than its mean-only baseline's MSE.
If a heteroscedastic model meets this criteria, we say it is `faithful.'
\Cref{def:faithful} formalizes this notion of faithfulness.

\begin{definition}\label{def:faithful}
    Let $(\mu,\Sigma)(\cdot) \in \G$ (per \cref{eq:comb-func}) with initial parameters $\theta\in\Theta$ map covariates onto mean estimate $\mu(x;\theta) \approx \E[Y|X=x]$ and covariance estimate $\Sigma(x;\theta) \approx \text{Cov}[Y|X=x]$.
    Let $\mu_0(\cdot) \in \F_\mu$ (satisfying \cref{def:mean-only}) with initial parameters $\theta_{\mu} \subset \theta$ map covariates onto mean estimate $\mu_0(x;\theta_{\mu}) \approx \E[Y|X=x]$.
    If $\mu_0(\cdot)$ and $(\mu,\Sigma)(\cdot)$'s parameters are separately optimized with algorithm $\A$ and
    $\E[|Y-\mu_0(x)|_2 - |Y-\mu(x)|_2] \geq 0$,
    then $\G$ is \textbf{faithful} to $\F_\mu$ under $\A$.
    Otherwise, $\G$ is \textbf{unfaithful} to $\F_\mu$ under $\A$.
    The expectation is w.r.t.~the data distribution $p(X,Y)$ and all sources of randomness in $\A$.
\end{definition}

Using separate mean and variance networks, minimizing the SSE of the mean network first, freezing its parameters, and thereafter optimizing the NLL of the variance network achieves faithfulness under \cref{def:faithful}.
This approach has several key deficiencies.
It requires monitoring the mean network for convergence in order to transition to fitting the variance network.
Sequential optimization is both inefficient and can lead to poor results if mean network optimization halts prematurely.
Also, mandating separate networks for the mean and variance prohibits sharing of learned representations.
For example, if data points are natural images and the mean network uses convolutional layers, it is likely that the first convolution layer will learn a set of edge detectors that might be redundant to those learned by a convolutional layer in a separate variance network.
Relearning representations is computationally wasteful and only gets worse with deeper networks, which are often those with the best performance.
Our proposal provably satisfies \cref{def:faithful} while also addressing these deficiencies.

\paragraph{Our Solution.}
Consider the highly general network for parameterizing a Gaussian heteroscedastic model in \cref{fig:comp-graph}, which we partition into three subnetworks:
\begin{itemize}
    \item Shared representation learner $f_{\text{trunk}}$ has parameters $\theta_z$.
    \item Mean parameter head $f_\mu$ has parameters $\theta_\mu$.
    \item Covariance parameter head $f_\Sigma$ has parameters $\theta_\Sigma$.
\end{itemize}
These three subnetworks comprise a hetersocedastic network from $\G$ (\cref{eq:comb-func}, outer gray box \cref{fig:comp-graph}) such that $(\theta_z,\theta_\mu,\theta_\Sigma)\in\Theta$.
Subnetworks $f_{\text{trunk}}$ and $f_\mu$ comprise a mean-only network from $\F_\mu$ (\cref{eq:mean-func}, inner gray ellipse \cref{fig:comp-graph}) such that $(\theta_z,\theta_\mu)\in\Theta_\mu$.
Thus, $\F_\mu$ in \cref{fig:comp-graph} is the mean-only baseline satisfying \cref{def:mean-only} to which we will compare a heteroscedastic model from $\G$.
If $f_{\text{trunk}}$ is the identity function, then the mean and covariance networks are separate.
Alternatively, $f_{\text{trunk}}$ could be a multi-layer network with vector output $z$.
One possible $f_\Sigma(z;w_\Sigma,b_\Sigma)=\text{diag}(\text{softplus}(z^T w_\Sigma + b_\Sigma))$ adds a single output layer.

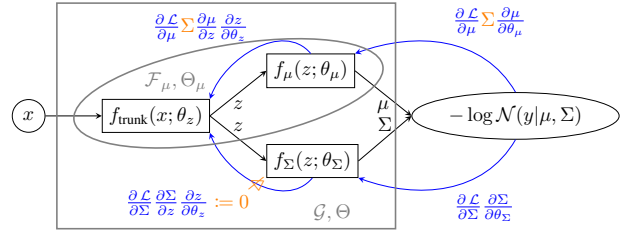
\begin{figure}[h]
\resizebox{0.475\textwidth}{!}{%
\begin{tikzpicture}
    \node[circle,draw=black] (x) {$x$};
    \node[rectangle,draw=black] (trunk) [right=of x] {$f_{\text{trunk}}(x;\theta_z)$};
    \node[rectangle,draw=black] (mean) [right=of trunk, yshift=0.8cm] {$f_\mu(z;\theta_\mu)$};
    \node[rectangle,draw=black] (variance) [right=of trunk, yshift=-0.8cm] {$f_\Sigma(z;\theta_\Sigma)$};
    \node[ellipse,draw=black] (loss) [right=of mean, yshift=-0.8cm] {$-\log\N(y|\mu,\Sigma)$};

    \draw[-stealth] (x.east) -- (trunk.west);
    \draw[-stealth] (trunk.east) -- (mean.west) node[midway,below] {$z$};
    \draw[-stealth] (trunk.east) -- (variance.west) node[midway,above] {$z$};
    \draw[-stealth] (mean.east) -- (loss.west) node[midway,below] {$\mu$};
    \draw[-stealth] (variance.east) -- (loss.west) node[midway,above] {$\Sigma$};

    \draw[-stealth,draw=blue] (loss.north) to[out=120,in=20]
        node[auto,swap]
        {$\color{blue}\frac{\partial \LL}{\partial \mu}\color{orange}\Sigma\color{blue}\frac{\partial \mu}{\partial \theta_\mu}$}
        (mean.north east);
    \draw[-stealth,draw=blue] (loss.south) to[out=-120,in=-20]
        node[auto]
        {$\color{blue}\frac{\partial \LL}{\partial \Sigma}\frac{\partial \Sigma}{\partial \theta_\Sigma}$}
        (variance.south east);
    \draw[-stealth,draw=blue] (mean.north) to[out=130,in=70]
        node[auto,swap]
        {$\color{blue}\frac{\partial \LL}{\partial \mu}\color{orange}\Sigma\color{blue}\frac{\partial \mu}{\partial z}\frac{\partial z}{\partial \theta_z}$}
        (trunk.north east);
    \draw[-stealth,draw=blue] (variance.south) to[out=-130,in=-70]
        node[sloped] {\color{orange}$\ntriangleleft$}
        node[auto]
        {$\color{blue}\frac{\partial \LL}{\partial \Sigma}\frac{\partial \Sigma}{\partial z}\frac{\partial z}{\partial \theta_z}\color{orange}\coloneqq0$}
        (trunk.south east);

    \draw[gray,thick] let \p1=(trunk.west), \p2=(mean.east), \n1={atan2(\y2-\y1,\x2-\x1)}, \n2={veclen(\y2-\y1,\x2-\x1)}
        in ($ (\p1)!0.5!(\p2) $) ellipse [x radius=\n2/2+.5cm, y radius=.85cm, rotate=\n1];
    \node (F) [above=0pt of trunk, xshift=10pt] {$\color{gray}\F_\mu,\Theta_\mu$};

    \draw[gray,thick] (0.5,-2) rectangle (6.5,2);
    \node (G) [below=10pt of variance, xshift=10pt] {$\color{gray}\G,\Theta$};
\end{tikzpicture}
}%
\caption{Heteroscedastic Network Partitions}
\label{fig:comp-graph}
\end{figure}

Conventional optimization of \cref{fig:comp-graph} forward passes covariates $x$ (black arrows) to compute the NLL and then backward passes its gradient w.r.t.~network parameters (blue arrows and equations) to render parameter updates~\citep{nix1994estimating}.
We propose two changes (shown in orange) to gradient computation and its backward pass:
\begin{enumerate}[start=1,label={\bfseries Proposal \arabic*:},leftmargin=5.35em]
    \item Scale $\nabla_{\mu(x)} \LL$ by $\Sigma$, its inverse Jacobian (i.e.~a Newton step instead of a gradient step).
    \item Stop $\nabla_{\Sigma(x)} \LL$ from contributing to updates for any shared parameters $\theta_z$ in $f_{\text{trunk}}$.
\end{enumerate}
Together, these modifications ensure the mean subnetwork $\mu(x) = (f_\mu \circ f_\text{trunk})(x)$ will undergo optimization identically as if it were removed from $\G$ and trained on its own.
Hence, our solution guarantees $\G$ will be faithful to $\F_\mu$ according to \cref{def:faithful}.
Our first modification replaces the mean's gradient (\cref{eq:grad-mean}) with its second-order Newton step
\begin{align*}
    \sum_{(x,y)\in\D} \Sigma(x) \Sigma^{-1}(x) (\mu(x) - y) =\sum_{(x,y)\in\D} \mu(x) - y,
\end{align*}
which is recognizable as the gradient of a homoscedastic model with isotropic unit covariance (i.e.~from minimizing SSE).
The $\Sigma(x) \Sigma^{-1}(x)$ cancellation saves a matrix inversion and matrix-vector product.
Our second modification 
\begin{align*}
    \frac{\partial \LL}{\partial \theta_z} =
        \Bigg(\frac{\partial \LL}{\partial \mu} \frac{\partial \mu}{\partial z} +
            \cancelto{\coloneqq 0}{\frac{\partial \LL}{\partial \Sigma} \frac{\partial \Sigma}{\partial z}}\Bigg)
        \frac{\partial z}{\partial \theta_z}
\end{align*}
eliminates the covariance's influence on any shared parameters $\theta_z$.
Thus, by shielding $\theta_z$ from $\frac{\partial \LL}{\partial \Sigma} \frac{\partial \Sigma}{\partial z}\frac{\partial z}{\partial \theta_z}$ (the influence from $\Sigma(x)$) and ensuring equivalent parameter updates $\Delta_{\theta_\mu}$ and $\Delta_{\theta_z}$ between a heteroscedastic model and its homoscedastic mean-only baseline, we have provably satisfied \cref{def:faithful}.
By equally weighting all error terms, our proposal cannot enter the deleterious `rich-get-richer' cycle where poor mean estimates are explained by high variance and subsequently ignored.
Minimizing $\LL_\text{ours} \equiv$
\begin{sizeddisplay}{\small}
\begin{align}
    \sum_{(x,y)\in\D} \frac{\big|y-\mu(x)\big|_2}{2} -\log \N \big(y;\lfloor\mu(x)\rfloor,\Sigma(\lfloor f_{\text{trunk}}(x) \rfloor)\big)
    \label{eq:loss-ours}
\end{align}
\end{sizeddisplay}
is an equivalent implementation of our two proposals.
We use $\lfloor\cdot\rfloor$ to denote the stop gradient operation, which converts its operand to a constant from the optimizer's perspective.
As before, $\mu(x) = (f_\mu \circ f_\text{trunk})(x)$.
\Cref{eq:loss-ours} is easy to implement in popular deep learning frameworks and is compatible with any gradient-based optimizer.

\begin{theorem}\label{thm:faithful}
    Under the same assumptions of \cref{def:faithful}, gradient-based optimization of $\LL_\text{ours}$ (\cref{eq:loss-ours}) guarantees $\G$'s faithfulness to $\F_\mu$.
\end{theorem}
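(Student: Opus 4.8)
The plan is to show that, step for step, gradient-based optimization of $\LL_\text{ours}$ drives the parameters of $\G$'s mean subnetwork $\mu = f_\mu\circ f_\text{trunk}$ along exactly the trajectory that optimizing the mean-only baseline $\mu_0\in\F_\mu$ under squared-error loss would produce, so that the two trained mean functions coincide pointwise and the inequality in \cref{def:faithful} holds with equality.

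First I would read the gradient of $\LL_\text{ours}$ (\cref{eq:loss-ours}) off its stop-gradient structure, block by block over $\theta=(\theta_z,\theta_\mu,\theta_\Sigma)$. In the term $-\log\N(y;\lfloor\mu(x)\rfloor,\Sigma(\lfloor f_\text{trunk}(x)\rfloor))$ the operand $\mu(x)$ is a constant to the optimizer, so this term contributes nothing to the mean-head parameters $\theta_\mu$ nor to $\theta_z$ along the mean path; and $f_\text{trunk}(x)$ is a constant inside $\Sigma$, so it contributes nothing to $\theta_z$ along the covariance path either --- it contributes only $\frac{\partial\LL}{\partial\Sigma}\frac{\partial\Sigma}{\partial\theta_\Sigma}$, i.e.~updates to $\theta_\Sigma$ alone. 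The remaining term depends only on $\mu(x)$, hence only on $(\theta_z,\theta_\mu)$, and its gradient w.r.t.~those blocks reproduces exactly the update a mean-only model receives under squared-error loss --- this is the content of the `equivalent implementation' remark preceding the theorem, realized by the $\Sigma(x)\Sigma^{-1}(x)$ cancellation (the Newton step) that collapses \cref{eq:grad-mean} into the homoscedastic/SSE gradient $\mu(x)-y$. Hence $\nabla_{(\theta_z,\theta_\mu)}\LL_\text{ours}$ equals the SSE gradient, as a function of $(\theta_z,\theta_\mu)$ with no dependence on $\theta_\Sigma$.

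Next I would induct over the optimizer steps $t=0,1,2,\dots$ of $\A$, coupling the $\G$-run (on $\LL_\text{ours}$) with the $\F_\mu$-run (on SSE) so they share every source of randomness --- minibatch order, dropout masks, and the initialization of the common parameters $(\theta_z,\theta_\mu)$; this coupling is legitimate because \cref{def:faithful} already takes the expectation over exactly these sources. The base case is the shared initialization assumed in \cref{def:faithful} (the mean-only model's initial parameters are the corresponding sub-vector of $\G$'s $\theta$). For the step: given that the two runs agree on $(\theta_z,\theta_\mu)$ and on the current minibatch, the preceding paragraph gives equal $(\theta_z,\theta_\mu)$-gradients; since a standard first-order optimizer (SGD, momentum, Adam, \dots) updates a parameter block from that block's own gradient history alone, the optimizer state restricted to $(\theta_z,\theta_\mu)$ --- and thus the next $(\theta_z,\theta_\mu)$-iterate --- also agree. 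Carrying the induction to termination, $\G$'s trained mean subnetwork and $\mu_0$ have identical parameters, so $\mu(x)=\mu_0(x)$ for every $x$; therefore $\E[|Y-\mu_0(x)|_2-|Y-\mu(x)|_2]=0\ge 0$ and $\G$ is faithful to $\F_\mu$ under $\A$.

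I expect the main obstacle to be making the inductive step airtight around the optimizer: I must exclude any mechanism by which the covariance-only parameters $\theta_\Sigma$ (or their gradients) could feed back into the $(\theta_z,\theta_\mu)$ updates --- e.g.~a global gradient-norm clip, a learning-rate schedule keyed to the total loss value, or block-coupling weight decay/regularization. This restriction is satisfied by the per-coordinate updates of the optimizers actually used (and is why \cref{thm:faithful} is phrased for `gradient-based optimization'), so I would state it as a standing assumption and note that Proposal 2 has already done the substantive work --- stopping the covariance gradient into $f_\text{trunk}$ is precisely what keeps $\theta_\Sigma$ out of the $\theta_z$ gradient --- leaving only this benign bookkeeping condition on the optimizer.
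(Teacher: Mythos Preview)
Your proposal is correct and follows essentially the same approach as the paper: compute the block gradients of $\LL_\text{ours}$, observe that the $(\theta_z,\theta_\mu)$-gradients coincide with those of the SSE loss and have no $\theta_\Sigma$-dependence, then argue that under shared randomness the optimizer trajectories on $(\theta_z,\theta_\mu)$ are identical step by step. Your version is simply more explicit --- you spell out the induction over optimizer steps and flag the bookkeeping assumption that the optimizer's per-block updates do not couple through $\theta_\Sigma$ (e.g.\ via global clipping or loss-keyed schedules), a caveat the paper's terse proof leaves implicit.
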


\begin{proof}
    Differentiating $\LL_\text{ours}$, one finds that
    \begin{align*}
        \frac{\partial \LL_\text{ours}}{\partial \theta_\mu} &= \sum_{(x,y)\in\D} \Big(f_\mu(z) - y\Big) \frac{\partial f_\mu(z)}{\partial \theta_\mu} \\
        \frac{\partial \LL_\text{ours}}{\partial \theta_z} &= \sum_{(x,y)\in\D} \Big(f_\mu(z) - y\Big) \frac{\partial f_\mu(z)}{\partial z} \frac{\partial z}{\partial \theta_z},
    \end{align*}
    which are equivalent to those arising from a SSE $\sum_{(x,y)\in\D} \frac{1}{2}\big|y-\mu(x)\big|_2$ loss.
    Thus, if both the heteroscedastic and homoscedastic optimizations experience the same randomness (e.g.~by using a random number seed), then parameter updates $\Delta\theta_\mu$ and $\Delta\theta_z$ will be identical at every optimization step.
    Because equivalence holds for every step for a single random number seed, it holds for all steps for any random number seed.
\end{proof}

\paragraph{Related Solutions.}

This article considers parametric methods that map a data point's covariates directly onto the response variable's parameter space.
In contrast, Gaussian processes (GPs) are non-parametric insofar that they require evaluating a kernel on all training (or inducing) points to compute the predictive mean and variance for a query point~\citep{rasmussen_gaussian_2006}.
Most GPs including those that use neural networks to project covariates to a latent space before kernel evaluation~\citep{wilson2016deep} assume homoscedastic isotropic noise $\sigma^2 I$ and cannot capture heteroscedastic noise variance (like we do).
While a homoscedastic GP's predictive variance does change w.r.t.~covariates, this change is in epistemic uncertainty only (e.g.~$x_*$'s distance to training/inducing points).
Technically, heteroscedasticity refers to noise variance only.
\Citet{goldberg1997regression} introduce a GP with heteroscedastic noise assumptions.
That said, GPs are a fundamentally different model class than the parametric methods we consider.
We only compare performances within model classes to avoid confounding model selection and optimization methods.

\citet{nix1994estimating} first proposed using neural networks as Gaussian parameter maps.
As such, we include their proposal in all of our experiments.
Their technique, despite its optimization pitfalls~\citep{seitzer2022on}, underlies Monte Carlo dropout~\citep{kendall2017uncertainties} and Deep Ensembles~\citep{lakshminarayanan2017simple}, which average outputs from \citet{nix1994estimating} over dropout samples and random initializations, respectively.
Both of these methods produce a uniform mixture of Normals as the predictive distribution.
In \cref{sec:experiments}, this underlying model experimentally underperforms, and we do not trust averaging to fix it.
Instead, our supplement adapts our proposals to these two additional model classes and finds significant performance gains.

Several approaches adopt a Bayesian perspective to alleviate optimization problems, which results in a Student predictive distribution.
\citet{takahashi2018student} parameterize a Student using mean, precision, and degrees-of-freedom network(s).
Integration over Gamma-distributed precisions seemingly improves optimization stability for points whose mean errors are small since integration almost surely includes non-zero variances.
\Citet{skafte2019reliable} use the same Student parameterization with several other modifications.
\Citet{stirn2020variational} place a prior over precision and perform variational inference.
For well-chosen priors, the resulting Kullback–Leibler (KL) divergence in the variatonal objective can regularize variance away from very small (those that cause optimization instability~\citep{takahashi2018student}) and very large (those that produce poor estimates via the `rich-get-richer' cycle~\citep{seitzer2022on}) variances.
Our supplement adapts our proposals to the Student model and again finds substantial performance gains.

Within the class of parametric methods with a Normal predictive, \citet{seitzer2022on} propose their $\beta$-NLL loss
\begin{sizeddisplay}{\small}
\begin{align*}
    \sum_{(x,y)\in\D} \lfloor \sigma^{2\beta}(x) \rfloor \Bigg(\frac{1}{2}\log \sigma^2(x) + \frac{(y - \mu(x))^2}{2\sigma^2(x)}\Bigg).
\end{align*}
\end{sizeddisplay}
Again, $\lfloor\cdot\rfloor$ is the stop gradient operation.
Setting $\beta=0$, is equivalent to conventional NLL minimization.
For $\beta=1$, the gradients of $\beta$-NLL and SSE w.r.t.~mean estimates are identical, but gradients w.r.t.~variance estimates will still influence shared parameters.
Changing $\beta$ from 0 to 1 changes the curvature of the gradient w.r.t.~variance function outputs (see supplement).
\citet{seitzer2022on} recommend $\beta=0.5$ for the best balance between RMSE and log likelihood performance, but never check variance calibration.
While $\beta=1$ seemingly prevents poor mean estimates explained by high variance from being ignored, it is important to test what effect, if any, $\beta\neq0$ has on variance calibration.
We examine mean estimation accuracy and variance calibration for $\beta\in\{0.5,1\}$ in our experiments.

\begin{figure*}[ht]
    \includegraphics[width=\textwidth]{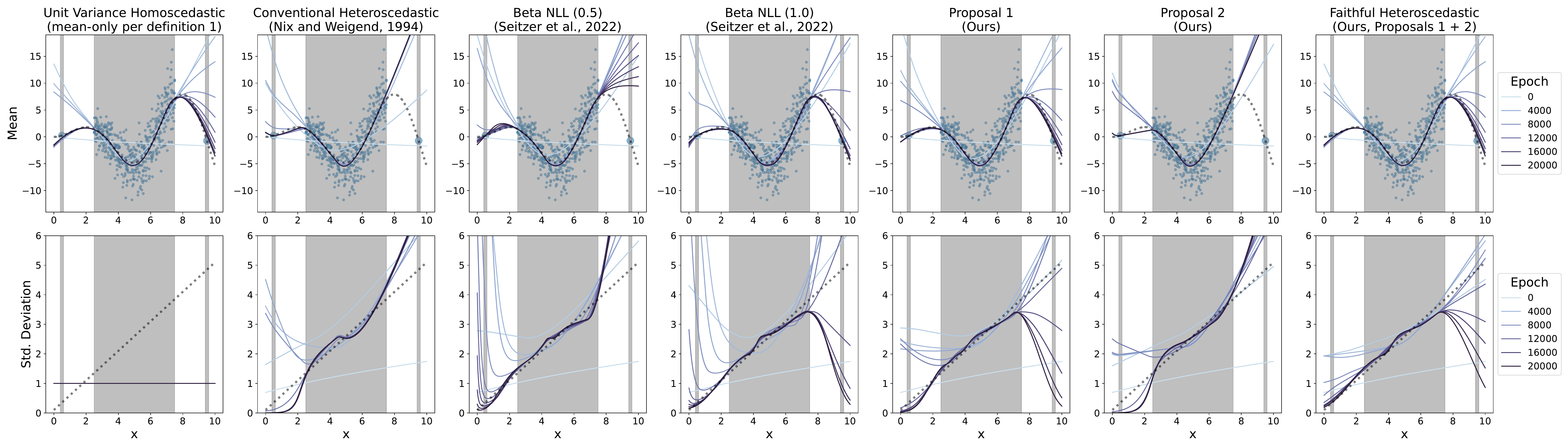}
    \caption{Convergence Behavior of Methods with Predictive Normal Distribtuions}
    \label{fig:covergence}
\end{figure*}

\section{EXPERIMENTS}\label{sec:experiments}
Our experiments examine mean accuracies, variance calibrations, and log likelihoods from our method and our chosen baselines.
\textbf{Unit Variance Homoscedastic} is the mean-only baseline (satisfying \cref{def:mean-only}) and uses a nominal homoscedastic isotropic unit covariance for variance calibration and log likelihood measurements.
\textbf{Conventional Heteroscedastic} denotes standard NLL minimization of a heteroscedastic model~\citep{nix1994estimating}.
We label results for \textbf{Beta NLL}~\citep{seitzer2022on} with the $\beta$ setting in parentheses.
\textbf{Proposal 1} and \textbf{Proposal 2} apply our two proposals from \cref{sec:methods} in isolation to confirm both are necessary.
Our top method, \textbf{Faithful Heteroscedastic}, utilizes proposals 1 and 2 for guaranteed faithfulness.

\subsection{Convergence of the Predictive Mean and Variance}\label{subsec:convergence}
\Cref{fig:covergence} plots the convergence behaviour for all methods with a Normal predictive distribution.
Our supplement has plots for Deep Ensemble, Monte Carlo Dropout, and Student based methods with similar findings.
We generate data similarly to~\citet{skafte2019reliable,stirn2020variational,seitzer2022on}.
We sample $x \sim \text{Uniform}(2.5,7.5)$ and then set $y \triangleq x \cdot \sin(x) + \epsilon$ with $\epsilon|x \sim \N(0, 0.1 + |0.5 x|)$.
We add $(X=0.5,Y=0.5 \cdot \sin(0.5))$ and $(X=9.5,Y=9.5 \cdot \sin(9.5))$ as isolated points for a total of 500 samples (shown in blue, isolated points appear larger for visual convenience).
We highlight regions of covariate space containing data with gray backgrounds.
We plot the predictive mean and variance every 2000 training epochs in the top and bottom rows of \cref{fig:covergence}, respectively.
The dashed lines show the true mean and variance.

This experiment employs a neural network with a single hidden layer for $f_{\text{trunk}}$.
The mean and variance heads are each a single layer.
The mean-only model, Unit Variance Homoscedastic, converges on both isolated points while estimating the true mean over the data rich interval $[2.5,7.5]$.
This behavior not only mimics a GP (to which this network architecture can have theoretical connections~\citep{rasmussen_gaussian_2006}) but also confirms the mean subnetwork of the heteroscedastic model has the flexibility to do the same.
However, the Conventional Heteroscedastic model, after quickly fitting its mean to the majority of points, has a drastic error at $X=9.5$.
As \citet{seitzer2022on} suspect, the model quickly increases variance to explain this large error, which effectively eliminates this point's learning signal and ultimately prohibits convergence thereto.
Our Proposal 2 does the same.
This problem does not fully disappear for Beta NLL (0.5).
The Beta NLL (1.0), Proposal 1, and Faithful Heteroscedastic models all produce accurate mean estimates with convergence at both isolated points and accurate variance estimates.

\subsection{UCI Regression}\label{subsec:uci}
The following experiments utilize publicly available regression datasets from the UCI repository\footnote{Our code downloads and processes these datasets automatically.}.
We scale response variables to have zero mean and unit variance.
We now use a neural network with two hidden layers as the shared representation learner $f_\text{trunk}$.
The mean and variance heads are single layers operating on $f_\text{trunk}$'s output.
To best approximate the expectation in \cref{def:faithful}, we first divide each UCI dataset into ten folds.
We then aggregate the predictions of each held-out fold from the ten models resulting from cross-validation such that we have a held-out prediction for every data point~\citep{hofling2008study}.
Every model uses the same fold assignments and, for every fold, shares the same initial parameters.
Our supplement contains additional details.

\begin{table*}[t!]
    \caption{UCI Regression Performance of Methods with Predictive Normal Distribtuions}
    \label{tab:uci}
    \adjustbox{width=\textwidth}{\begin{tabular}{l|ccc|ccc|ccc|ccc|ccc|ccc|ccc}
\toprule
 & \multicolumn{3}{|c}{Unit Variance Homoscedastic} & \multicolumn{3}{|c}{Conventional Heteroscedastic} & \multicolumn{3}{|c}{Beta NLL (0.5)} & \multicolumn{3}{|c}{Beta NLL (1.0)} & \multicolumn{3}{|c}{Proposal 1} & \multicolumn{3}{|c}{Proposal 2} & \multicolumn{3}{|c}{Faithful Heteroscedastic} \\
 & \multicolumn{3}{|c}{(mean-only per \cref{def:mean-only})} & \multicolumn{3}{|c}{\citep{nix1994estimating}} & \multicolumn{3}{|c}{\citep{seitzer2022on}} & \multicolumn{3}{|c}{\citep{seitzer2022on}} & \multicolumn{3}{|c}{(Ours)} & \multicolumn{3}{|c}{(Ours)} & \multicolumn{3}{|c}{(Ours, Proposals 1 + 2)} \\
Dataset & RMSE & ECE & LL & RMSE & ECE & LL & RMSE & ECE & LL & RMSE & ECE & LL & RMSE & ECE & LL & RMSE & ECE & LL & RMSE & ECE & LL \\
\midrule
boston & 0.304 & 0.168 & -0.965 & \sout{0.374} & \sout{0.00977} & \sout{-2.04} & \textbf{0.341} & \textbf{0.0224} & \textbf{-12} & \textbf{0.335} & \textbf{0.0269} & \textbf{-4.5} & \sout{0.355} & \sout{0.00973} & \sout{-0.886} & \sout{0.355} & \sout{0.0121} & \sout{-2.27} & \textbf{0.304} & 0.0303 & \textbf{-17.4} \\
carbon & 0.0489 & 0.401 & -2.76 & \sout{0.0493} & \sout{0.00247} & \sout{3.89} & \textbf{0.0488} & 0.00152 & \textbf{-30.8} & \sout{0.05} & \sout{0.00727} & \sout{-9.76e+04} & \sout{0.0818} & \sout{5.48e-05} & \sout{4.78} & \sout{0.0491} & \sout{0.00225} & \sout{5.38} & \textbf{0.0489} & \textbf{0.00124} & \textbf{-9.49} \\
concrete & 0.265 & 0.127 & -0.954 & \sout{0.315} & \sout{0.00601} & \sout{-2.24} & \textbf{0.269} & 0.0348 & -59.8 & \textbf{0.263} & 0.0417 & -22.2 & \sout{0.293} & \sout{0.0103} & \sout{-2} & \sout{0.294} & \sout{0.00941} & \sout{-1.2} & \textbf{0.265} & \textbf{0.0282} & \textbf{-2.81} \\
energy & 0.16 & 0.314 & -1.85 & \sout{0.193} & \sout{0.00236} & \sout{3.39} & \sout{0.182} & \sout{0.00139} & \sout{3.9} & \textbf{0.168} & 0.00242 & \textbf{3.29} & \sout{0.195} & \sout{0.00161} & \sout{2.79} & \sout{0.193} & \sout{0.00966} & \sout{2.6} & \textbf{0.16} & \textbf{0.00127} & \textbf{3.35} \\
naval & 0.0247 & 0.401 & -1.84 & \sout{0.533} & \sout{0.00432} & \sout{3.23} & \sout{0.159} & \sout{0.0133} & \sout{1.85} & \sout{0.0262} & \sout{0.00116} & \sout{6.6} & \sout{0.207} & \sout{0.000175} & \sout{2.64} & \sout{0.581} & \sout{0.00212} & \sout{2.8} & \textbf{0.0247} & \textbf{0.00197} & \textbf{6.63} \\
power plant & 0.22 & 0.0105 & -0.943 & \sout{0.227} & \sout{0.000175} & \sout{0.0106} & \sout{0.224} & \sout{0.000199} & \sout{0.0456} & \sout{0.223} & \sout{0.00034} & \sout{-53.2} & \sout{0.235} & \sout{0.000155} & \sout{0.0385} & \sout{0.223} & \sout{0.000127} & \sout{0.0722} & \textbf{0.22} & \textbf{0.000183} & \textbf{0.0937} \\
protein & 0.00257 & 0.489 & -0.919 & \sout{0.029} & \sout{0.00291} & \sout{3.73} & \sout{0.00397} & \sout{0.00599} & \sout{4.57} & \sout{0.00308} & \sout{0.00342} & \sout{4.62} & \sout{0.0372} & \sout{8.67e-05} & \sout{2.1} & \sout{0.0675} & \sout{0.00957} & \sout{2.67} & \textbf{0.00257} & \textbf{0.00591} & \textbf{4.68} \\
superconductivity & 0.326 & 0.0104 & -0.972 & \sout{0.388} & \sout{0.00109} & \sout{-10} & \textbf{0.327} & 0.002 & -23.7 & \textbf{0.327} & 0.00184 & -7.57 & \sout{0.37} & \sout{0.000297} & \sout{-0.181} & \sout{0.358} & \sout{0.00174} & \sout{-1.51} & \textbf{0.326} & \textbf{0.00103} & \textbf{-0.291} \\
wine-red & 0.769 & 0.00601 & -1.21 & \textbf{0.769} & 0.00203 & -3.96 & \textbf{0.769} & 0.00181 & -3.77 & \textbf{0.767} & 0.0019 & \textbf{-4.69} & 0.773 & 0.00186 & -1.7 & \textbf{0.769} & \textbf{0.00159} & \textbf{-1.19} & \textbf{0.769} & 0.00242 & \textbf{-1.22} \\
wine-white & 0.777 & 0.0015 & -1.22 & \sout{0.786} & \sout{0.000868} & \sout{-22.6} & \textbf{0.784} & 0.001 & -1.92 & \sout{0.784} & \sout{0.000767} & \sout{-7.37} & \sout{0.787} & \sout{0.000935} & \sout{-2.51} & \textbf{0.779} & 0.000613 & \textbf{-1.27} & \textbf{0.777} & \textbf{0.000514} & \textbf{-1.2} \\
yacht & 0.0226 & 0.882 & -0.919 & \sout{0.657} & \sout{0.0331} & \sout{-0.769} & \textbf{0.0226} & \textbf{0.00442} & \textbf{2.29} & \textbf{0.0226} & 0.00955 & 1.86 & \sout{0.135} & \sout{0.0103} & \sout{0.738} & \sout{0.454} & \sout{0.0876} & \sout{-0.0822} & \textbf{0.0226} & 0.0174 & \textbf{1.33} \\
\textit{{Total wins or ties}} & -- & -- & -- & \textit{1} & \textit{0} & \textit{0} & \textit{7} & \textit{2} & \textit{3} & \textit{6} & \textit{1} & \textit{3} & \textit{0} & \textit{0} & \textit{0} & \textit{2} & \textit{1} & \textit{2} & \textit{11} & \textit{8} & \textit{11} \\
\bottomrule
\end{tabular}
}
\end{table*}

We report mean estimation accuracy with RMSE.
For variance, we use expected calibration error (ECE)~\citep{kuleshov2018accurate}.
For bin edges $0 = p_0 < p_1 < \hdots <p_{m-1} \leq p_m = 1$, ECE computes bin probabilities
\begin{align*}
    \hat{p}_j = \frac{|\{(x,y)\in\D : p_{j-1} < F(y|x) \leq p_j\}|}{|\D|},
\end{align*}
where $F(y|x)$ is the Normal CDF.
For a well-calibrated model, $\hat{p}_j \approx p_j - p_{j-1}$.
Accordingly, ECE is defined as $\sum_{j=1}^{m} (\hat{p}_j - (p_j - p_{j-1}))^2$.
We report RMSE, ECE, and log likelihood (LL) in \cref{tab:uci} for all methods with Normal predictive distributions.
Our supplement has tables for Deep Ensemble, Monte Carlo Dropout, and Student based methods with similar findings.
Any model whose predictive squared errors are greater than those of the mean-only baseline (Unit Variance Homoscedastic) according to a one-sided paired t-test with a 0.05 threshold is considered empirically unfaithful and has its RMSE, ECE, and LL struck out.
We use a paired t-test since the squared errors from two models are not independent--each pair results from the same held out data point.
For each dataset, we bold the smallest RMSE and, using the same significance test, identify any statistical ties, which too are bold.
We also bold the smallest ECE, but ignore struck-out values from any unfaithful models.
We use a G-test to identify statistically significant differences between the histograms generated from a model's $\hat{p}_j$ values.
Any faithful model, whose $p$-value $\geq0.05$ when compared to the faithful model with the best ECE is considered tied for best.
Among faithful models, we bold the largest LL and any statistical ties according to a one-sided Kolmogorov-Smirnov test with a $0.05$ threshold.
Tallying the number of wins/ties for each metric, we find our Faithful Heteroscedastic model is never empirically unfaithful and offers the best combination of mean estimation accuracy, variance calibration, and data log likelihood.
Proposals 1 and 2 underperform in isolation, confirming both are necessary to achieve faithfulness.

\begin{table*}[t!]
    \caption{VAE Performance of Methods with Predictive Normal Distribtuions}
    \label{tab:vae}
    \adjustbox{width=\textwidth}{\begin{tabular}{ll|ccc|ccc|ccc|ccc|ccc|ccc|ccc}
\toprule
 &  & \multicolumn{3}{|c}{Unit Variance Homoscedastic} & \multicolumn{3}{|c}{Conventional Heteroscedastic} & \multicolumn{3}{|c}{Beta NLL (0.5)} & \multicolumn{3}{|c}{Beta NLL (1.0)} & \multicolumn{3}{|c}{Proposal 1} & \multicolumn{3}{|c}{Proposal 2} & \multicolumn{3}{|c}{Faithful Heteroscedastic} \\
 &  & \multicolumn{3}{|c}{(mean-only per \cref{def:mean-only})} & \multicolumn{3}{|c}{\citep{nix1994estimating}} & \multicolumn{3}{|c}{\citep{seitzer2022on}} & \multicolumn{3}{|c}{\citep{seitzer2022on}} & \multicolumn{3}{|c}{(Ours)} & \multicolumn{3}{|c}{(Ours)} & \multicolumn{3}{|c}{(Ours, Proposals 1 + 2)} \\
Dataset &  & RMSE & ECE & LL & RMSE & ECE & LL & RMSE & ECE & LL & RMSE & ECE & LL & RMSE & ECE & LL & RMSE & ECE & LL & RMSE & ECE & LL \\
\midrule
\multirow[t]{2}{*}{fashion-mnist} & clean & 0.923 & 0.00232 & -1.05e+03 & \sout{1.61} & \sout{0.00013} & \sout{-711} & \sout{0.962} & \sout{0.000161} & \sout{78.6} & \sout{0.945} & \sout{0.000374} & \sout{-452} & \sout{0.935} & \sout{2.26e-05} & \sout{-144} & \sout{1.52} & \sout{0.000322} & \sout{-477} & \textbf{0.923} & \textbf{7.12e-05} & \textbf{-217} \\
 & corrupt & 1.17 & 0.00251 & -1.26e+03 & \sout{1.77} & \sout{0.000178} & \sout{-482} & \sout{1.2} & \sout{0.000296} & \sout{-81.9} & \sout{1.19} & \sout{0.000576} & \sout{-976} & \sout{1.19} & \sout{2.4e-05} & \sout{-320} & \sout{1.64} & \sout{0.00042} & \sout{-370} & \textbf{1.17} & \textbf{6.33e-05} & \textbf{-392} \\
\multirow[t]{2}{*}{mnist} & clean & 0.759 & 0.00923 & -946 & \sout{2.27} & \sout{0.00072} & \sout{117} & \sout{0.85} & \sout{0.000182} & \sout{-3.56e+03} & \sout{0.78} & \sout{0.00106} & \sout{-741} & \sout{0.792} & \sout{1.89e-05} & \sout{546} & \sout{2.14} & \sout{0.00112} & \sout{-565} & \textbf{0.759} & \textbf{3.16e-05} & \textbf{119} \\
 & corrupt & 1.04 & 0.00839 & -1.15e+03 & \sout{2.33} & \sout{0.000555} & \sout{-361} & \sout{1.07} & \sout{0.000243} & \sout{-1.53e+04} & \sout{1.07} & \sout{0.00668} & \sout{-1.43e+05} & \sout{1.06} & \sout{2.51e-05} & \sout{220} & \sout{1.8} & \sout{0.00236} & \sout{-413} & \textbf{1.04} & \textbf{5.78e-05} & \textbf{-358} \\
\textit{{Total wins or ties}} &  & -- & -- & -- & \textit{0} & \textit{0} & \textit{0} & \textit{0} & \textit{0} & \textit{0} & \textit{0} & \textit{0} & \textit{0} & \textit{0} & \textit{0} & \textit{0} & \textit{0} & \textit{0} & \textit{0} & \textit{4} & \textit{4} & \textit{4} \\
\bottomrule
\end{tabular}
}
\end{table*}

\subsection{Decomposing Variance Estimates}\label{subsec:vae}

\Citet{kendall2017uncertainties,lakshminarayanan2017simple} want predictive variance to include epistemic and aleatoric uncertainty.
Conversely, we seek to accurately decompose predictive variance into its epistemic and aleatoric components.
Consider a variational autoencoder (VAE)~\citep{kingma2013auto} that stochastically encodes an image into a low-dimensional space and employs a Gaussian decoder.
A heteroscedastic Gaussian decoder will use per-pixel predictive variance to explain reconstruction errors.
Collecting more data is unlikely to reduce these variances.
Yet, we could replace the VAE with a deterministic autoencoder to trivially achieve infinite likelihood: use the identity function to autoencode the mean and set variance to zero.
Thus, we argue a VAE's variance estimates for a set of unique images are entirely epistemic (i.e.~only reducible with a model change) and capture compression loss.

If we define $f_\text{trunk}$ as a stochastic, convolutional encoder $q(z;x)$, then \cref{fig:comp-graph} becomes a VAE.
A VAE minimizes
\begin{align*}
    \E_{q(z;x)}[-\log \N(x;f_\mu(z), f_\Sigma(z))] + D_{KL}\big(q(z;x) || p(z)\big).
\end{align*}
The KL divergence is simply a regularization loss for $f_\text{trunk}$.
Using a single sample from $q(z;x)$, we use our chosen baselines' proposed objective for the NLL.
We note, however, our loss is equivalent to optimization changes such that we are not changing the VAE model but rather its optimization.
Parameter heads $f_\mu$ and $f_\Sigma$ are each transposes of the encoder.
Our supplement contains additional details.

\begin{figure}[t]
    \includegraphics[width=0.475\textwidth]{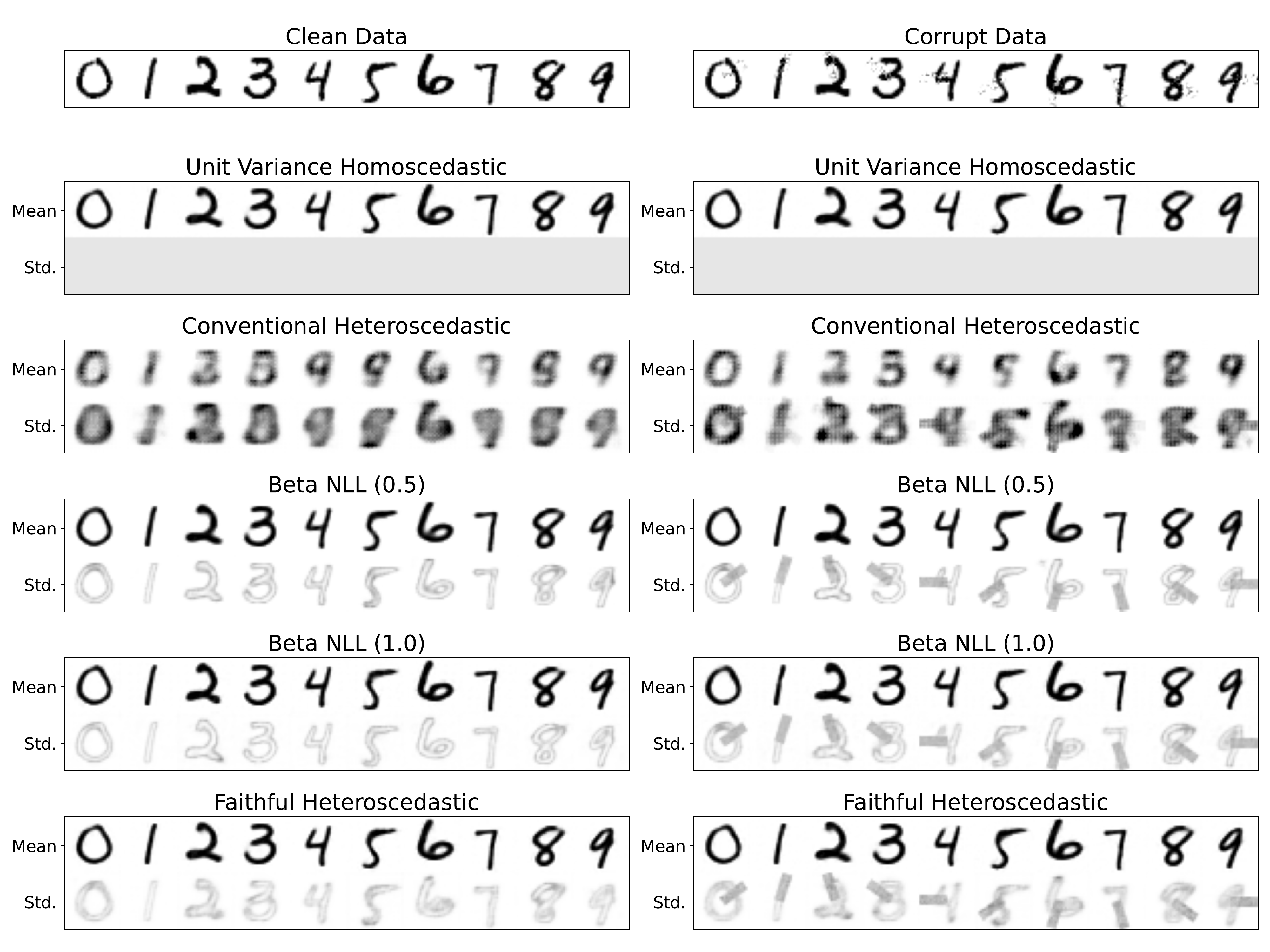}
    \caption{VAE Predictive Moments for MNIST}
    \label{fig:vae-moments}
\end{figure}

The left column of \cref{fig:vae-moments} shows an example set of images and each model's corresponding mean and standard deviation estimates.
The Conventional Heteroscedastic model uses high variance to explain poor mean fits.
All other models produce sensible looking means.
Their estimated variances suggest compression loss most significantly affects edge localization, which we argue is epistemic.

We now add simulated heteroscedastic noise to each image, where the standard deviation is a rectangular patch whose magnitude is 25\% the dynamic range of the images.
The patch is rotated according to an image's class label (\cref{fig:vae-noise}, top) to simulate heteroscedastic aleatoric uncertainty.
The right column of \cref{fig:vae-moments} shows an example set of corrupt images and each model's moment estimates when trained on the corrupt data.
Except for the Conventional Heteroscedastic model, the variance estimates contain the same epistemic uncertainty from the clean data as well as the added aleatoric noise variance.
To isolate heteroscedastic aleatoric uncertainty for an image, we subtract the variance output from a model trained on clean data from that of a model trained on corrupt data.
The well-known Bias-Variance equality supports this decomposition:
\begin{align*}
    \footnotesize
    \underbrace{\text{Var}[y|x]}_\text{noise variance}
    &=
    \underbrace{\E[(y-\mu(x))^2]}_\text{aleatoric and epistemic} -
    \underbrace{(\E[y|x] - \mu(x))^2}_\text{only epistemic mean uncertainty} \\
    &\approx
    \underbrace{\Sigma_\text{noisy}(x)}_\text{model trained w/ noisy $y|x$} -
    \underbrace{\Sigma_\text{clean}(x)}_\text{model trained w/ $\E[y|x]$}.
\end{align*}
We plot the average difference in predictive variance ($\Sigma_\text{noisy}(x) - \Sigma_\text{clean}(x)$) over all images within a class in \cref{fig:vae-noise}.
The ability to recover the true noise variance suggests that, for datasets where we have clean and noisy measurements, we can use well-calibrated heteroscedastic models to decompose the predicted variance into its aleatoric and epistemic components.
Our supplement includes full size images for all datasets.

\Cref{tab:vae} reports performance and identifies wins/ties as described in \cref{subsec:uci}.
Again, our Faithful Heteroscedastic model offers the best combined performance.
All other models were empirically unfaithful.

\begin{figure}[t]
    \centering
    \includegraphics[width=0.475\textwidth]{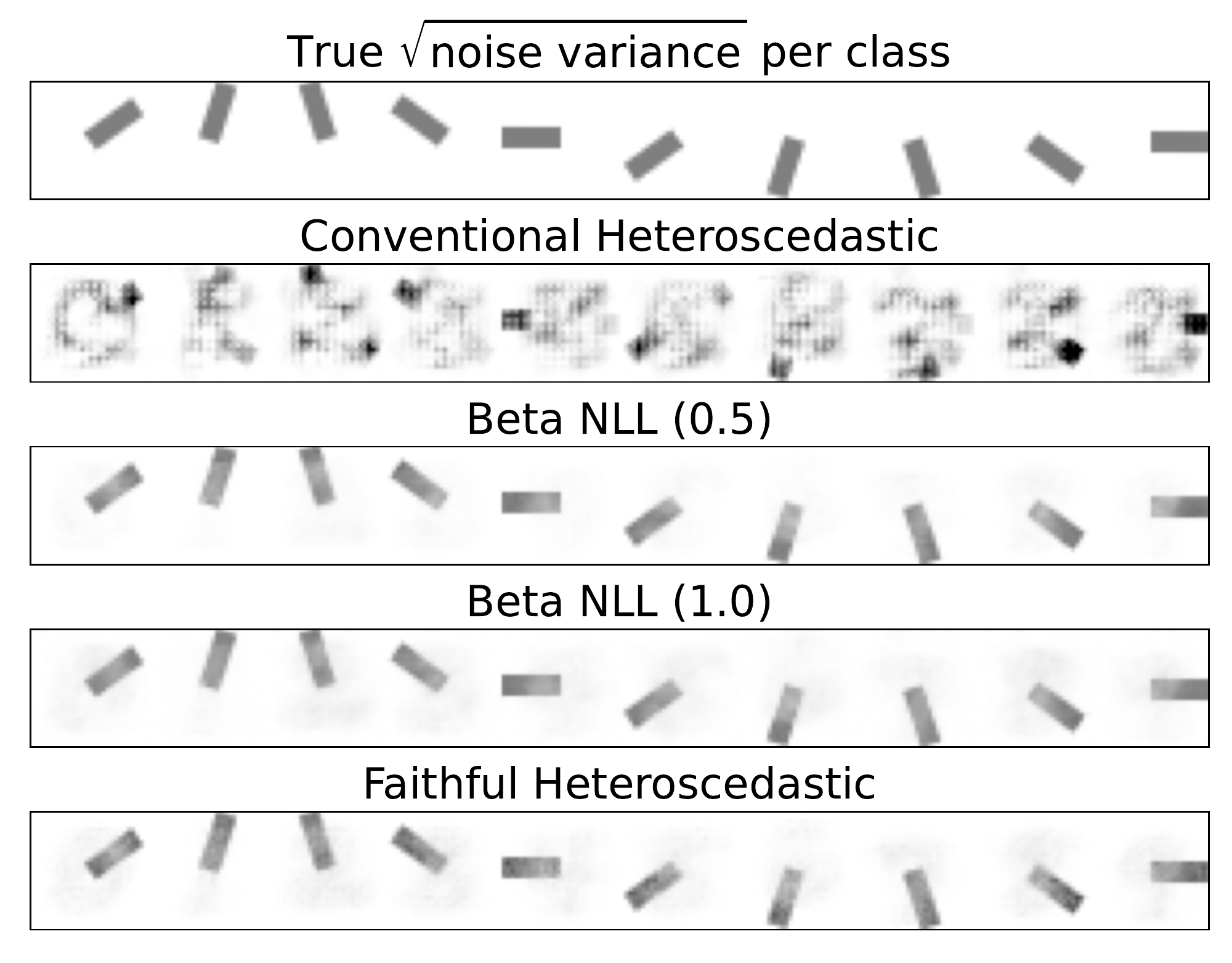}
    \caption{VAE Recovered Noise Variance for MNIST}
    \label{fig:vae-noise}
\end{figure}

\subsection{Noise Variance in CRISPR-Cas13 efficacy}\label{subsec:crispr}

\begin{table*}[t!]
    \caption{Modeling CRISPR Cas13 Efficacy}
    \label{tab:crispr}
    \adjustbox{width=\textwidth}{\begin{tabular}{ll|ccc|ccc|ccc|ccc|ccc|ccc|ccc}
\toprule
 &  & \multicolumn{3}{|c}{Unit Variance Homoscedastic} & \multicolumn{3}{|c}{Conventional Heteroscedastic} & \multicolumn{3}{|c}{Beta NLL (0.5)} & \multicolumn{3}{|c}{Beta NLL (1.0)} & \multicolumn{3}{|c}{Proposal 1} & \multicolumn{3}{|c}{Proposal 2} & \multicolumn{3}{|c}{Faithful Heteroscedastic} \\
 &  & \multicolumn{3}{|c}{(mean-only per \cref{def:mean-only})} & \multicolumn{3}{|c}{\citep{nix1994estimating}} & \multicolumn{3}{|c}{\citep{seitzer2022on}} & \multicolumn{3}{|c}{\citep{seitzer2022on}} & \multicolumn{3}{|c}{(Ours)} & \multicolumn{3}{|c}{(Ours)} & \multicolumn{3}{|c}{(Ours, Proposals 1 + 2)} \\
Dataset & Observations & RMSE & ECE & LL & RMSE & ECE & LL & RMSE & ECE & LL & RMSE & ECE & LL & RMSE & ECE & LL & RMSE & ECE & LL & RMSE & ECE & LL \\
\midrule
\multirow[t]{2}{*}{flow cytometry (HEK293)} & means & 0.279 & 0.0209 & -0.958 & \sout{0.285} & \sout{0.0925} & \sout{-11.4} & \textbf{0.272} & \textbf{0.0214} & -1.27 & \textbf{0.274} & 0.0258 & -1.25 & \sout{0.297} & \sout{0.0088} & \sout{-0.915} & \sout{0.289} & \sout{0.0599} & \sout{-5.21} & 0.279 & 0.0219 & \textbf{-1.04} \\
 & replicates & 0.294 & 0.00673 & -0.962 & 0.295 & 0.0117 & -1.2 & \textbf{0.286} & 0.00697 & -0.658 & 0.294 & 0.00695 & -0.65 & \sout{0.304} & \sout{0.00925} & \sout{-1.12} & \sout{0.301} & \sout{0.00635} & \sout{-0.697} & 0.294 & \textbf{0.00459} & \textbf{-0.511} \\
\multirow[t]{2}{*}{survival screen (A375)} & means & 0.31 & 0.00596 & -0.967 & \sout{0.313} & \sout{0.000577} & \sout{-0.251} & \textbf{0.309} & 0.00131 & -0.337 & \textbf{0.309} & 0.00066 & -0.262 & \sout{0.314} & \sout{0.000948} & \sout{-0.328} & \sout{0.312} & \sout{0.000578} & \sout{-0.246} & \textbf{0.31} & \textbf{0.000599} & \textbf{-0.248} \\
 & replicates & 0.36 & 0.00158 & -0.984 & \sout{0.361} & \sout{0.000324} & \sout{-0.422} & \textbf{0.36} & 0.000311 & -0.422 & \textbf{0.36} & 0.000243 & -0.406 & \sout{0.363} & \sout{0.00033} & \sout{-0.437} & \sout{0.361} & \sout{0.000183} & \sout{-0.391} & \textbf{0.36} & \textbf{0.000206} & \textbf{-0.394} \\
\multirow[t]{2}{*}{survival screen (HEK293) } & means & 0.304 & 0.0178 & -0.965 & \sout{0.309} & \sout{0.0111} & \sout{-1.92} & \textbf{0.298} & 0.00362 & -0.261 & 0.301 & 0.00271 & -0.211 & \sout{0.308} & \sout{0.0033} & \sout{-0.282} & \sout{0.318} & \sout{0.00433} & \sout{-0.439} & 0.304 & \textbf{0.00235} & \textbf{-0.194} \\
 & replicates & 0.346 & 0.00281 & -0.979 & \sout{0.349} & \sout{0.00128} & \sout{-0.45} & \textbf{0.345} & 0.000663 & -0.337 & \sout{0.348} & \sout{0.000551} & \sout{-0.332} & \sout{0.352} & \sout{0.000859} & \sout{-0.403} & \sout{0.351} & \sout{0.000869} & \sout{-0.381} & \textbf{0.346} & \textbf{0.000453} & \textbf{-0.316} \\
\textit{{Total wins or ties}} &  & -- & -- & -- & \textit{0} & \textit{0} & \textit{0} & \textit{6} & \textit{1} & \textit{0} & \textit{3} & \textit{0} & \textit{0} & \textit{0} & \textit{0} & \textit{0} & \textit{0} & \textit{0} & \textit{0} & \textit{3} & \textit{5} & \textit{6} \\
\bottomrule
\end{tabular}
}
\end{table*}

While our example of corrupting a clean set of images with heteroscedastic noise in \cref{subsec:vae} may seem a bit contrived, a similar situation arises in computational biology all the time.
In particular, biologists will often replicate experiments to account for technical variation.
Having access to a dataset with the raw replicates allows for the creation of a dataset comprised of averaged (de-noised) replicates.

CRISPR-Cas13 is a system for knocking down gene expression.
Unlike Cas9, which modifies DNA, Cas13 targets and degrades specific RNA transcripts.
Cas13 systems use a guide RNA (gRNA) to recruit the Cas13 protein to destroy complementary RNA transcripts in its host.
Cas13 has known sequence preferences--some sequences are more targetable than others.
We have access to three datasets measuring Cas13 efficacy that vary in cell and screen type.
\ifnum\statePaper=0{
While one dataset~\citep{wessels_massively_2020} is publicly available, the other two are subsets of data currently in review at biology venues\footnote{We promise to release these data by camera-ready. We certify the submissions to biology venues do not violate AISTATS' dual submission policy.}.
}\else{
The flow-cytometry dataset~\citep{wessels_massively_2020} is publicly available.
The other two will become publicly available in 2023.
}\fi
For each dataset, we have multiple efficacy scores for each target sequence as a result of the replicated experiments.
We predict a scalar value that measures knockdown efficacy (a more negative value indicates higher efficacy) from one-hot encoded target sequence.
Shared representation learner $f_\text{trunk}$ uses convolutional layers for motif discovery.
The parameter heads are fully-connected networks.
See supplement for details.

SHAP values quantify each feature's impact to a model's output w.r.t.~the average output taken over a set of background samples~\citep{lundberg_unified_2017}.
Using the same cross-validation strategy from \cref{subsec:uci}, we collect the predictive moments and their SHAP values for every data point;
we do this for all models twice: once when trained on raw replicates and again when trained on their average.
\Cref{tab:crispr} reports performance, identifying wins/ties as described in \cref{subsec:uci}.
While our Faithful Heteroscedastic model does not always have the best RMSE, it still offers the best ECE and combined performance.

\begin{figure}[h!]
    \centering
    \includegraphics[width=0.475\textwidth]{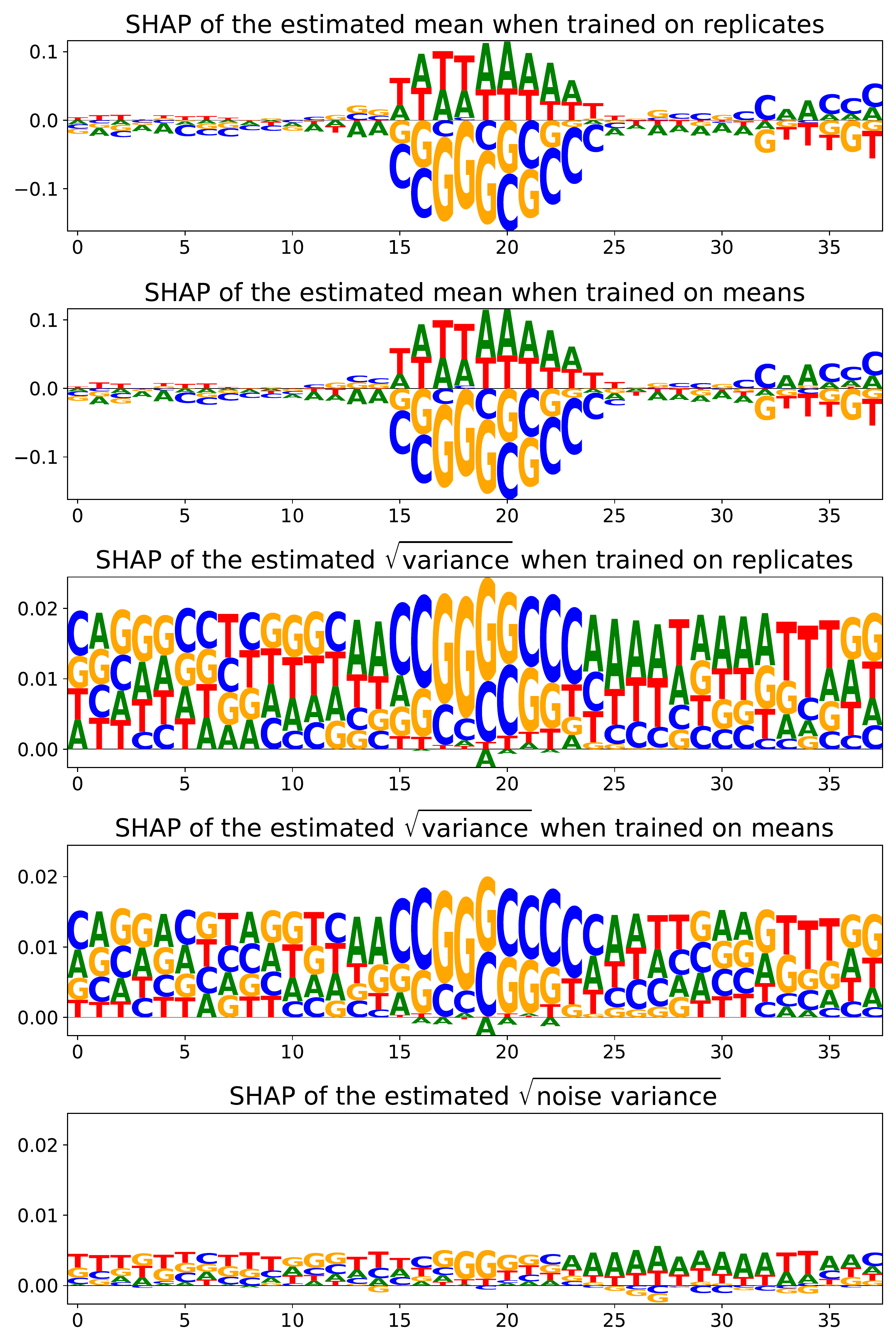}
    \caption{SHAP Analysis of Flow Cytometry (HEK293) Data Using our Proposed Faithful Heteroscedastic Model}
    \label{fig:crispr}
\end{figure}

Regardless of whether trained on means or replicates, our model's SHAP values for its mean output recover known Cas13 sequence preferences (\cref{fig:crispr}, first two rows) in HEK293 cells as measured by flow cytometry.
The next two rows of \cref{fig:crispr}, plot the average SHAP values for our model's variance output.
As expected, the magnitude decreases for the model trained on the averaged efficacy replicates.
The last row of \cref{fig:crispr}, subtracts the variance's SHAP values for our model when trained on averaged efficacy scores from those trained on raw scores (analogous to \cref{subsec:vae}).
This results shows that heteroscedasticity in CRISPR-Cas13 has a similar sequence dependence as the mean, suggesting more effective guides experience more measurement noise--a common heteroscedastic outcome.
To the best of our knowledge, we are the first to model sequence-dependent heteroscedasticity in CRISPR-Cas13.
Our supplement provides similar plots for the other datasets and models.

\section{DISCUSSION}\label{sec:discussion}

If the parameterizing neural network for a heteroscedastic Gaussian model has all components removed that are not necessary for generating mean estimates, the resulting subnetwork will have the same expressive power for mean estimation as the full model.
Therefore, we posit any heteroscedastic model's mean estimation accuracy should be as good as this mean-only baseline;
we say a model and its optimization are `faithful' upon achieving this.
Yet, conventional minimization of a heteroscedastic model's NLL can worsen its mean estimation accuracy~\citep{seitzer2022on}.
Our optimization proposals provably ensure `faithfulness' and are equivalent to a modified loss function, making our method exceptionally easy to implement in popular deep learning frameworks.
One limitation of our approach, a byproduct of \cref{thm:faithful}, is that our method can never have more accurate mean estimates than its mean-only baseline.
There are a few examples of the Beta NLL model~\citep{seitzer2022on} outperforming its mean-only baseline's mean estimation accuracy.

Our experiments include datasets ranging from simple scalar regression to auto-encoding natural images and predicting CRISPR-Cas13 efficacy from RNA sequence.
The tested network architectures are equally diverse.
Regardless of task or network complexity, our method overwhelmingly offers the best combination of mean estimation accuracy and variance calibration.
With our reliable mean and variance estimates, we demonstrate how to isolate aleatoric uncertainty in the predictive variance from epistemic uncertainty.
Our technique accurately recovers heteroscedasticity in simulated experiments.
Additionally, our model recovers known CRISPR-Cas13 sequence preferences and identifies low-levels of sequence-dependent (heteroscedastic) noise variance.
When model predictions inform high-impact decision-making (e.g.~those that affect health or society), accurate mean and uncertainty estimates are critical.
Thus, our method represents a step towards improving model trustworthiness and reliability.

\subsubsection*{Acknowledgements}
A.S.~devised the ideas in this article and implemented all computational experiments.
H.W.~collected and prepared both HEK293 datasets.
M.S. \& L.P.~collected and prepared the A735 dataset.
N.S.~supervises H.W.
D.A.~supervises A.S. \& M.S.

%
%
%
%

\bibliography{references}

\begin{thebibliography}{}

\bibitem[Chua et~al., 2018]{chua2018deep}
Chua, K., Calandra, R., McAllister, R., and Levine, S. (2018).
\newblock Deep reinforcement learning in a handful of trials using
  probabilistic dynamics models.
\newblock {\em Advances in neural information processing systems}, 31.

\bibitem[Der~Kiureghian and Ditlevsen, 2009]{der2009aleatory}
Der~Kiureghian, A. and Ditlevsen, O. (2009).
\newblock Aleatory or epistemic? does it matter?
\newblock {\em Structural safety}, 31(2):105--112.

\bibitem[Gal and Ghahramani, 2016]{gal2016dropout}
Gal, Y. and Ghahramani, Z. (2016).
\newblock Dropout as a bayesian approximation: Representing model uncertainty
  in deep learning.
\newblock In {\em international conference on machine learning}, pages
  1050--1059. PMLR.

\bibitem[Gal et~al., 2017]{gal2017deep}
Gal, Y., Islam, R., and Ghahramani, Z. (2017).
\newblock Deep bayesian active learning with image data.
\newblock In {\em International Conference on Machine Learning}, pages
  1183--1192. PMLR.

\bibitem[Goldberg et~al., 1997]{goldberg1997regression}
Goldberg, P., Williams, C., and Bishop, C. (1997).
\newblock Regression with input-dependent noise: A gaussian process treatment.
\newblock {\em Advances in neural information processing systems}, 10.

\bibitem[H{\"o}fling and Tibshirani, 2008]{hofling2008study}
H{\"o}fling, H. and Tibshirani, R. (2008).
\newblock A study of pre-validation.
\newblock {\em The Annals of Applied Statistics}, 2(2):643--664.

\bibitem[Kendall and Gal, 2017]{kendall2017uncertainties}
Kendall, A. and Gal, Y. (2017).
\newblock What uncertainties do we need in bayesian deep learning for computer
  vision?
\newblock {\em Advances in neural information processing systems}, 30.

\bibitem[Kingma and Ba, 2014]{kingma2014adam}
Kingma, D.~P. and Ba, J. (2014).
\newblock Adam: A method for stochastic optimization.
\newblock {\em arXiv preprint arXiv:1412.6980}.

\bibitem[Kingma and Welling, 2013]{kingma2013auto}
Kingma, D.~P. and Welling, M. (2013).
\newblock Auto-encoding variational bayes.
\newblock {\em arXiv preprint arXiv:1312.6114}.

\bibitem[Kuleshov et~al., 2018]{kuleshov2018accurate}
Kuleshov, V., Fenner, N., and Ermon, S. (2018).
\newblock Accurate uncertainties for deep learning using calibrated regression.
\newblock In {\em International conference on machine learning}, pages
  2796--2804. PMLR.

\bibitem[Lakshminarayanan et~al., 2017]{lakshminarayanan2017simple}
Lakshminarayanan, B., Pritzel, A., and Blundell, C. (2017).
\newblock Simple and scalable predictive uncertainty estimation using deep
  ensembles.
\newblock {\em Advances in neural information processing systems}, 30.

\bibitem[Lundberg and Lee, 2017]{lundberg_unified_2017}
Lundberg, S.~M. and Lee, S.-I. (2017).
\newblock A unified approach to interpreting model predictions.
\newblock {\em Advances in neural information processing systems}, 30.

\bibitem[Nix and Weigend, 1994]{nix1994estimating}
Nix, D.~A. and Weigend, A.~S. (1994).
\newblock Estimating the mean and variance of the target probability
  distribution.
\newblock In {\em Proceedings of 1994 ieee international conference on neural
  networks (ICNN'94)}, volume~1, pages 55--60. IEEE.

\bibitem[Osband et~al., 2016]{osband2016deep}
Osband, I., Blundell, C., Pritzel, A., and Van~Roy, B. (2016).
\newblock Deep exploration via bootstrapped dqn.
\newblock {\em Advances in neural information processing systems}, 29.

\bibitem[Rasmussen and Williams, 2006]{rasmussen_gaussian_2006}
Rasmussen, C.~E. and Williams, C. K.~I. (2006).
\newblock {\em Gaussian processes for machine learning}.
\newblock Adaptive computation and machine learning. MIT Press, Cambridge,
  Mass.
\newblock OCLC: ocm61285753.

\bibitem[Seitzer et~al., 2022]{seitzer2022on}
Seitzer, M., Tavakoli, A., Antic, D., and Martius, G. (2022).
\newblock On the pitfalls of heteroscedastic uncertainty estimation with
  probabilistic neural networks.
\newblock In {\em International Conference on Learning Representations}.

\bibitem[Skafte et~al., 2019]{skafte2019reliable}
Skafte, N., J{\o}rgensen, M., and Hauberg, S. (2019).
\newblock Reliable training and estimation of variance networks.
\newblock {\em Advances in Neural Information Processing Systems}, 32.

\bibitem[Stirn and Knowles, 2020]{stirn2020variational}
Stirn, A. and Knowles, D.~A. (2020).
\newblock Variational variance: Simple and reliable predictive variance
  parameterization.
\newblock {\em arXiv preprint arXiv:2006.04910}.

\bibitem[Takahashi et~al., 2018]{takahashi2018student}
Takahashi, H., Iwata, T., Yamanaka, Y., Yamada, M., and Yagi, S. (2018).
\newblock Student-t variational autoencoder for robust density estimation.
\newblock In {\em IJCAI}, pages 2696--2702.

\bibitem[Wessels et~al., 2020]{wessels_massively_2020}
Wessels, H.-H., Méndez-Mancilla, A., Guo, X., Legut, M., Daniloski, Z., and
  Sanjana, N.~E. (2020).
\newblock Massively parallel {Cas13} screens reveal principles for guide {RNA}
  design.
\newblock {\em Nature Biotechnology}, 38(6):722--727.

\bibitem[Wilson et~al., 2016]{wilson2016deep}
Wilson, A.~G., Hu, Z., Salakhutdinov, R., and Xing, E.~P. (2016).
\newblock Deep kernel learning.
\newblock In {\em Artificial intelligence and statistics}, pages 370--378.
  PMLR.

\bibitem[Yu et~al., 2020]{yu2020mopo}
Yu, T., Thomas, G., Yu, L., Ermon, S., Zou, J.~Y., Levine, S., Finn, C., and
  Ma, T. (2020).
\newblock Mopo: Model-based offline policy optimization.
\newblock {\em Advances in Neural Information Processing Systems},
  33:14129--14142.

\end{thebibliography}
\bibliographystyle{apalike}

\ifnum\stateSupp=1{
\clearpage
\onecolumn
\appendix

\section{FURTHER DISCUSSION OF RELATED WORK}\label{sec:further-discussion-of-related-work}

\citet{skafte2019reliable} propose four modifications to improve variance estimates.
While their combination yields a predictive Student distribution, one of their proposals on its own admits a Normal model.
Utilizing separate mean and variance networks, they use the first half of training to fit just the mean via minimizing sum of squared errors (SSE).
Thereafter, they alternate between minimizing the negative log likelihood (NLL) w.r.t.~mean network parameters and minimizing the NLL w.r.t.~variance network parameters.
The exact number of batches between alternations is a configurable hyperparameter.
They argue that fitting the variance should wait until mean estimates are reasonable.
This proposal is similar to the faithful approach discussed in section 2 of our main manuscript--use separate mean and variance networks, minimize the SSE of the mean network, freeze its parameters, and then minimize the NLL of the variance network--and suffers the same deficiencies.
Also, without freezing mean network parameters before fitting the separate variance network, their approach is not provably faithful.
That said, if the mean network is sufficiently converged, then \citet{skafte2019reliable} can avoid the deleterious `rich-get-richer' cycle where poor mean estimates are explained by high variance and subsequently ignored~\citep{seitzer2022on}.
However, their implementation has the added complication of running two adaptive gradient optimizers--one for the mean network and one for the variance network.
The gradient w.r.t.~one network's parameters could be drastically different after an interleaved adjustment to the other network's parameters.
It is unclear what effect, if any, this might have on Adam's~\citep{kingma2014adam} performance.
\citet{stirn2020variational} find that \citet{skafte2019reliable}'s combination of proposals can severely overestimate predictive variance in high-dimensional settings.

As discussed in our main manuscript, \citet{seitzer2022on} propose their $\beta$-NLL loss
\begin{align*}
    \LL = \sum_{(x,y)\in\D} \lfloor \sigma^{2\beta}(x) \rfloor \Bigg(\frac{1}{2}\log \sigma^2(x) + \frac{(y - \mu(x))^2}{2\sigma^2(x)}\Bigg).
\end{align*}
When $\beta=0$, their objective is the same as the NLL.
For $0 < \beta \leq 1$, the gradients become
\begin{align*}
    \nabla_{\mu(x)} \LL = \sum_{(x,y)\in\D} \frac{\mu(x) - y}{\sigma^{(2-2\beta)}(x)}, \quad
    \nabla_{\sigma^2(x)} \LL = \sum_{(x,y)\in\D} \frac{\sigma^2(x) - (y - \mu(x))^2}{2\sigma^{(4-2\beta)}(x)}.
\end{align*}
The gradient w.r.t.~the mean estimate is linear in $\mu(x)$ for all $\beta\in[0,1]$.
The gradient w.r.t.~the variance estimate is a difference of powers of $\sigma^2(x)$.
Changing these powers naturally affects the curvature of the gradient.
Supplement \cref{fig:beta-grad} plots $\nabla_{\sigma^2(x)} \LL$ when $(y - \mu(x))^2=1$ for different values of $\beta$.

\begin{figure}[h!]
    \centering
    \begin{adjustbox}{width=7cm}
    \begin{tikzpicture}
    \begin{axis}[axis lines=left, xlabel=\(\sigma^2(x)\), ylabel={\(\nabla_{\sigma^2(x)} \LL\)}, legend pos=outer north east]
        \addplot[domain=0.4:5, samples=1000, color=blue] {(x-1) / (2 * x^2)};
        \addlegendentry{$\beta=0.0$}
        \addplot[domain=0.4:5, samples=1000, color=orange]  {(x-1) / (2 * x^(3/2))};
        \addlegendentry{$\beta=0.5$}
        \addplot[domain=0.4:5, samples=1000, color=green] {(x-1) / (2 * x)};
        \addlegendentry{$\beta=1.0$}
    \end{axis}
    \end{tikzpicture}
    \end{adjustbox}
    \caption{Gradient of $\beta$-NLL w.r.t. Variance for Unit Mean Error at Different $\beta$ Settings}\label{fig:beta-grad}
\end{figure}
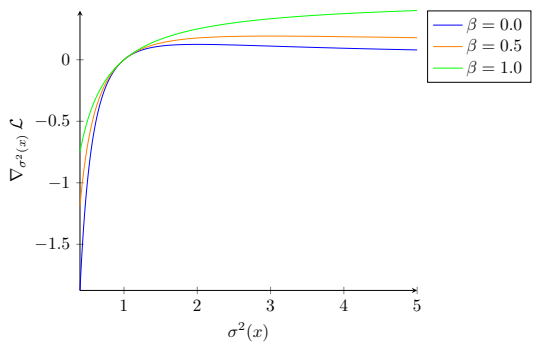

While the optima at $\sigma^2(x)=1$ does not move, the magnitude of the gradient about the optima does.
To the left of the optima, the magnitude of the gradient decreases as $\beta$ increases.
The opposite effect occurs to the right of the optima--magnitude of the gradient increases as $\beta$ increases.
Also, at $\beta=1$, $\nabla_{\sigma^2(x)} \LL$ becomes fully concave.
Given these changes, it is important to test their predictive variance calibration.

Our first proposal scales the gradient w.r.t.~the mean estimate by its inverse Jacobian and is identical to using $\beta=1$ for $\nabla_{\mu(x)} \LL$.
Our proposals do not alter the NLL's gradient w.r.t.~the variance estimate and thus implicitly set $\beta=0$ for $\nabla_{\sigma^2(x)} \LL$.
Experimentally, we consistently offer better variance calibration than \citet{seitzer2022on}.

\section{ADAPTING TO OTHER MODEL CLASSES}\label{sec:adaption}
As stated in the `Related Solutions' portion of our main manuscript (section 2), we only compare performances within model classes to avoid confounding model selection and optimization methods.
We generally define a model class by the resulting predictive distribution.
Our main article only considers Normal predictive distributions.
The following subsections adapt our proposals to different types of predictive distributions and compare to relevant methods.
Upcoming supplement \cref{sec:experiment-details} contains the architectural details for the models used in section 3 of our main manuscript.
Unless otherwise noted the following additional model classes use the same architectures.

\subsection{Adaptation to Deep Ensemble Methods}\label{sec:adaptation-de}

\citet{lakshminarayanan2017simple} use an ensemble of heteroscedastic Gaussian models with neural network parameter maps to improve predictive uncertainty estimation.
Their predictive distribution is a uniform mixture of $M$ models
\begin{align*}
    p(Y|X=x) = \frac{1}{M}\sum_{m=1}^M \N(Y | \mu(x,\theta_m), \Sigma(x,\theta_m)),
\end{align*}
where the component models are separately initialized and trained via conventional NLL minimization.
To construct a `faithful' ensemble, we simply train the $M$ components with our method to guarantee each component is `faithful.'
The Unit Variance Homoscedastic is a uniform mixture of $M$ mean-only models:
\begin{align*}
    p(Y|X=x) = \frac{1}{M}\sum_{m=1}^M \N(Y | \mu(x,\theta_m), I_{\dim(Y)}).
\end{align*}

\begin{figure}[ht!]
    \centering
    \includegraphics[width=0.75\textwidth]{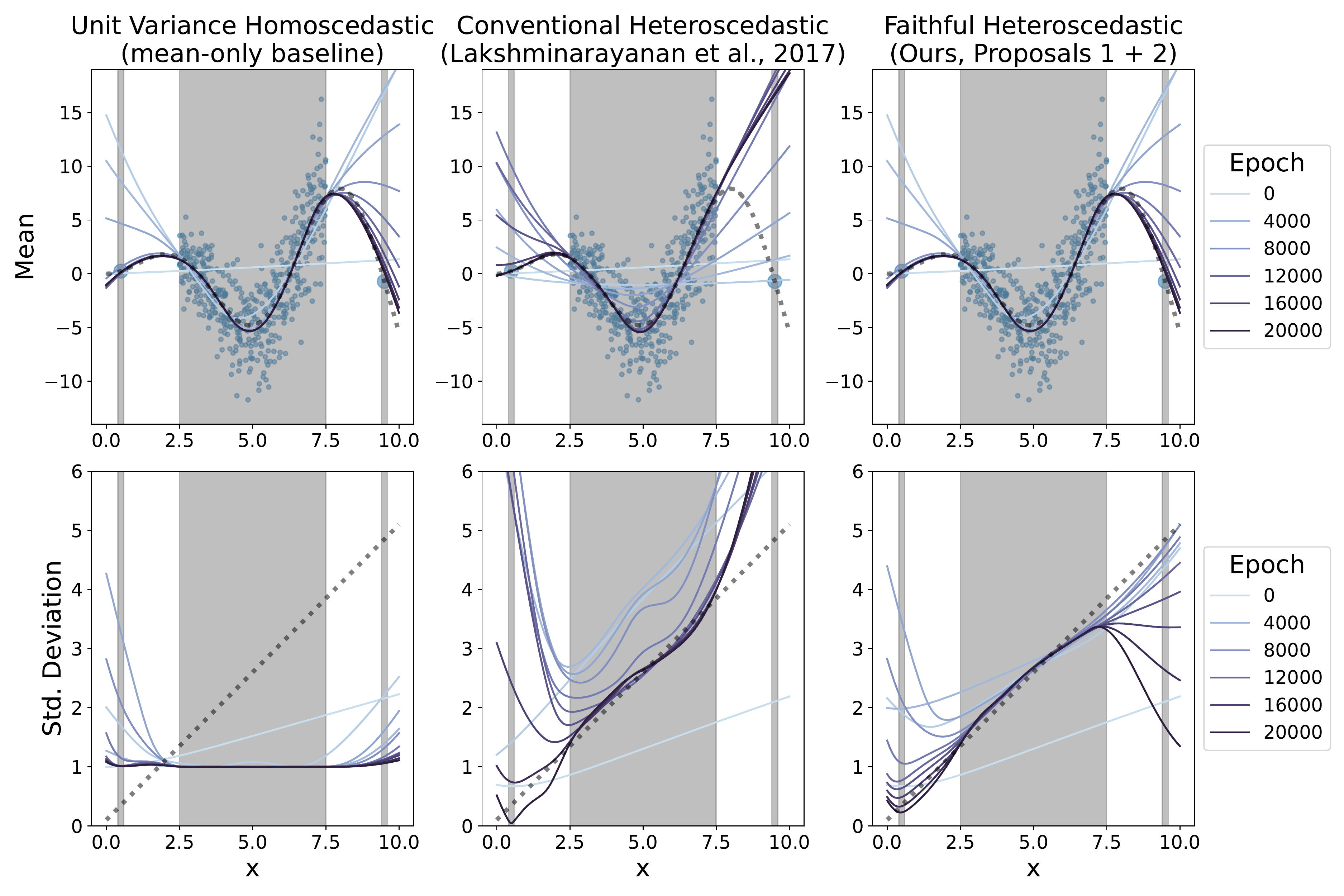}
    \caption{Convergence Behavior of Deep Ensemble Methods}
    \label{fig:covergence-de}
\end{figure}

\Cref{fig:covergence-de} examines the convergence behavior of Deep Ensemble methods.
Given the convergence flaws of the underlying component model (our main report; \citet{seitzer2022on}), it is unsurprising that an ensemble of these flawed models does not perform well.
Indeed, the Conventional Heteroscedastic Deep Ensemble~\citep{lakshminarayanan2017simple} uses high variance to explain mean errors at $X=9.5$ despite having the flexibility to converge thereto as demonstrated by
the Unit Variance Homoscedastic Deep Ensemble.
While the Unit Variance Homoscedastic Deep Ensemble uses a fixed variance for each of its components, the predictive variance of the ensemble changes w.r.t. $x$.
The increase in variance outside the data-rich interval $[2.5,7.5]$ arises from epistemic uncertainty in the $M$ mean estimates.
In particular, if any of the $\mu(x,\theta_m)$ components have slightly different outputs, then the variance of the predictive mixture increases.
\Cref{tab:uci-de} reports UCI regression performances for the Deep Ensemble methods.
Our faithful variant overwhelmingly outperforms the ensemble of conventionally optimized models.

\begin{table}[ht!]
    \centering
    \caption{UCI Regression Performance of Deep Ensemble Methods}
    \label{tab:uci-de}
    \adjustbox{width=0.75\textwidth}{\begin{tabular}{l|ccc|ccc|ccc}
\toprule
 & \multicolumn{3}{|c}{Unit Variance Homoscedastic} & \multicolumn{3}{|c}{Conventional Heteroscedastic} & \multicolumn{3}{|c}{Faithful Heteroscedastic} \\
Dataset & RMSE & ECE & LL & RMSE & ECE & LL & RMSE & ECE & LL \\
\midrule
boston & 0.32 & 0.172 & -0.975 & \sout{0.374} & \sout{0.00793} & \sout{-0.446} & \textbf{0.32} & \textbf{0.0053} & \textbf{-5.86} \\
carbon & 0.0487 & 0.48 & -2.76 & \textbf{0.0488} & 0.0169 & \textbf{9.35} & \textbf{0.0487} & \textbf{0.0111} & 4.99 \\
concrete & 0.25 & 0.154 & -0.955 & \sout{0.292} & \sout{0.00165} & \sout{-0.725} & \textbf{0.25} & \textbf{0.00374} & \textbf{-0.267} \\
energy & 0.159 & 0.327 & -1.85 & \sout{0.192} & \sout{0.0331} & \sout{4.38} & \textbf{0.159} & \textbf{0.0238} & \textbf{4.24} \\
naval & 0.0215 & 0.857 & -1.84 & \sout{0.313} & \sout{0.215} & \sout{5.32} & \textbf{0.0215} & \textbf{0.188} & \textbf{6.68} \\
power plant & 0.215 & 0.0275 & -0.943 & \sout{0.223} & \sout{0.000265} & \sout{0.158} & \textbf{0.215} & \textbf{0.000225} & \textbf{0.155} \\
protein & 0.00193 & 0.4 & -0.919 & \sout{0.0484} & \sout{0.228} & \sout{3.92} & \textbf{0.00193} & \textbf{0.28} & \textbf{4.61} \\
superconductivity & 0.293 & 0.0162 & -0.97 & \sout{0.375} & \sout{0.000344} & \sout{0.344} & \textbf{0.293} & \textbf{0.00104} & \textbf{0.185} \\
wine-red & 0.762 & 0.00564 & -1.21 & \textbf{0.765} & 0.00197 & \textbf{-1.11} & \textbf{0.762} & \textbf{0.00181} & -1.14 \\
wine-white & 0.748 & 0.00216 & -1.21 & \sout{0.756} & \sout{0.00044} & \sout{-1.3} & \textbf{0.748} & \textbf{0.000356} & \textbf{-1.1} \\
yacht & 0.0244 & 0.383 & -0.919 & \sout{0.663} & \sout{0.096} & \sout{-0.744} & \textbf{0.0244} & \textbf{0.0376} & \textbf{2.18} \\
\textit{{Total wins or ties}} & -- & -- & -- & \textit{2} & \textit{0} & \textit{2} & \textit{11} & \textit{11} & \textit{9} \\
\bottomrule
\end{tabular}
}
\end{table}

\subsection{Adaptation to Monte Carlo Dropout Methods}\label{sec:adaptation-mcd}

\citet{gal2016dropout} propose using dropout during training and prediction to approximate a Gaussian Process.
During prediction, they generate $M$ mean estimates from the same network but for different dropout samples;
this results in a predictive uniform mixture of a mean-only model evaluated over $M$ dropout samples $d_m$:
\begin{align*}
    p(Y|X=x) = \frac{1}{M}\sum_{m=1}^M \N(Y | \mu(x,d_m,\theta), I_{\dim(Y)}).
\end{align*}
\citet{kendall2017uncertainties} replace the underlying homoscedastic model with a heteroscedastic model, which results in a predictive uniform mixture of a heteroscedastic model evaluated over $M$ dropout samples:
\begin{align}\label{eq:heteroscedastic-mcd}
    p(Y|X=x) = \frac{1}{M}\sum_{m=1}^M \N(Y | \mu(x,d_m,\theta), \Sigma(x,d_m,\theta)).
\end{align}
Both \citet{gal2016dropout} and \citet{kendall2017uncertainties} use conventional NLL minimization with the addition of dropout.
Because \citet{gal2016dropout} assume homoscedastic noise variance, they avoid the deleterious `rich-get-richer' cycle of using high variance to explain and ignore poor mean estimates~\citep{seitzer2022on}.
\citet{kendall2017uncertainties}, however, map covariates to heteroscedastic noise variance estimates and thus are susceptible to entering this cycle.
To construct a `faithful' variant, we simply optimize the underlying Gaussian model with our method and dropout.
Evaluation of our `faithful' variant's predictive distribution is identically over dropout samples as in supplement \cref{eq:heteroscedastic-mcd}.

\Cref{fig:covergence-mcd} examines the convergence behavior of the Monte Carlo Dropout methods.
Unlike for the other model classes, we use two hidden layers in $f_\text{trunk}$ for Monte Carlo Dropout methods during our convergence experiments.
Using dropout with just a single layer deeply affected performance of all Monte Carlo Dropout methods.
It is unsurprising that adding dropout does not resolve NLL optimization flaws.
Indeed, the Conventional Heteroscedastic Monte Carlo Dropout~\citep{kendall2017uncertainties} uses high variance to explain the mean error at $X=9.5$ despite having the flexibility to converge thereto as demonstrated by
the Unit Variance Homoscedastic Monte Carlo Dropout~\citep{gal2016dropout}.
As with the Unit Variance Homoscedastic Deep Ensemble, the Unit Variance Homoscedastic Monte Carlo Dropout model's predictive variance changes w.r.t. $x$ outside the data-rich interval $[2.5,7.5]$ to capture epistemic uncertainty in the mean.
\Cref{tab:uci-mcd} reports UCI regression performances for the Monte Carlo Dropout methods.
Our faithful variant overwhelmingly outperforms~\citet{kendall2017uncertainties}.

\begin{figure*}[h!]
    \centering
    \includegraphics[width=0.75\textwidth]{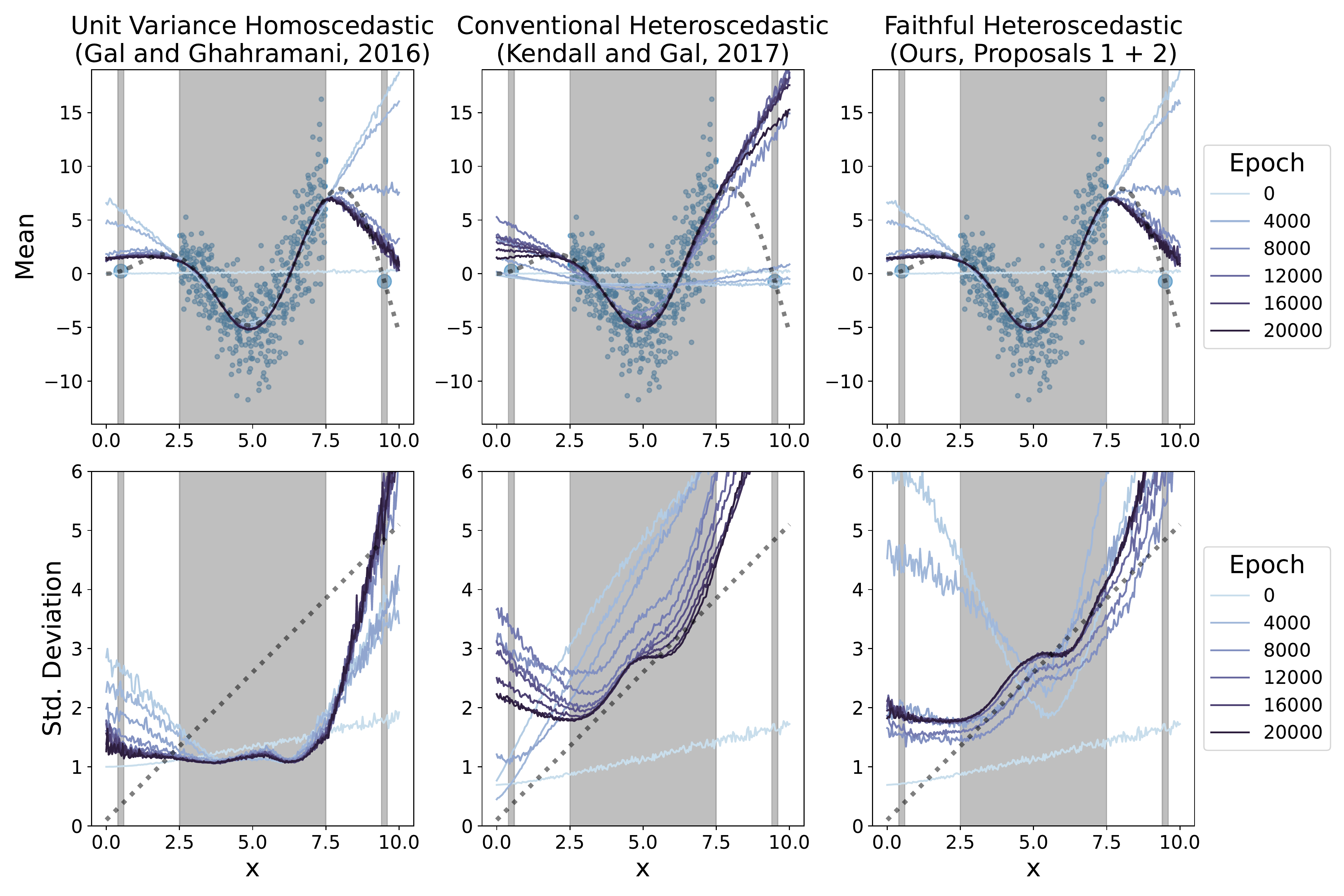}
    \caption{Convergence Behavior of Monte Carlo Dropout Methods}
    \label{fig:covergence-mcd}
\end{figure*}

\begin{table*}[h!]
    \centering
    \caption{UCI Regression Performance of Monte Carlo Dropout Methods}
    \label{tab:uci-mcd}
    \adjustbox{width=0.75\textwidth}{}
\end{table*}

\subsection{Adaptation to Student Methods}\label{sec:adaptation-sudent}

In section 2 of our main manuscript, we discuss existing methods for improving heteroscedastic regression performance that yield a predictive Student distribution~\citep{takahashi2018student,skafte2019reliable,stirn2020variational}.
Our proposals are also adaptable to the Student distribution.
To do so, we add a new parameter head such that the network partitions are now:
\begin{itemize}
    \item Shared representation learner $f_{\text{trunk}}$ has parameters $\theta_z$.
    \item Location parameter head $f_\mu$ has parameters $\theta_\mu$.
    \item Scale parameter head $f_\sigma$ has parameters $\theta_\sigma$.
    \item Degrees-of-freedom parameter head $f_\nu$ has parameters $\theta_\nu$, for which we constrain outputs to $(3, \infty)$ via a shifted softplus (we do this for our implementations of~\citet{takahashi2018student,stirn2020variational} as well).
\end{itemize}
Similar to our proposal for heteroscedastic Normal models, we can guarantee faithfulness when optimizing a heteroscedastic Student model's log likelihood $\LL$ by:
\begin{enumerate}
    \item Setting $\nabla_{\mu(x)} \LL \coloneqq \sum\limits_{(x,y)\in\D} \mu(x) - y$ (i.e.~the gradient resulting from minimizing the sum of squared errors)
    \item Stopping $\nabla_{\sigma(x)} \LL$ and $\nabla_{\nu(x)} \LL$ from contributing to updates for any shared parameters $\theta_z$
\end{enumerate}
The first proposal is equivalent to optimizing a Student distribution in the limit as the degrees of freedom $\rightarrow \infty$.
At prediction time, our Unit Variance Homoscedastic Student variant approximates this limit by outputting a constant 100 for the degrees-of-freedom parameter and setting $\sigma = \sqrt{(100 - 2) / 100}$ to ensure unit variance.

\Cref{fig:covergence-student} examines the convergence behavior of the Student methods.
Our Faithful Heteroscedastic Student is the only heteroscedastic method that can converge on the isolated point at $X=9.5$.
\Cref{tab:uci-student} reports UCI regression performances for the Student methods.
Our Faithful Heteroscedastic Student is never empirically unfaithful and is the top-performing method.

\begin{figure*}[h!]
    \centering
    \includegraphics[width=\textwidth]{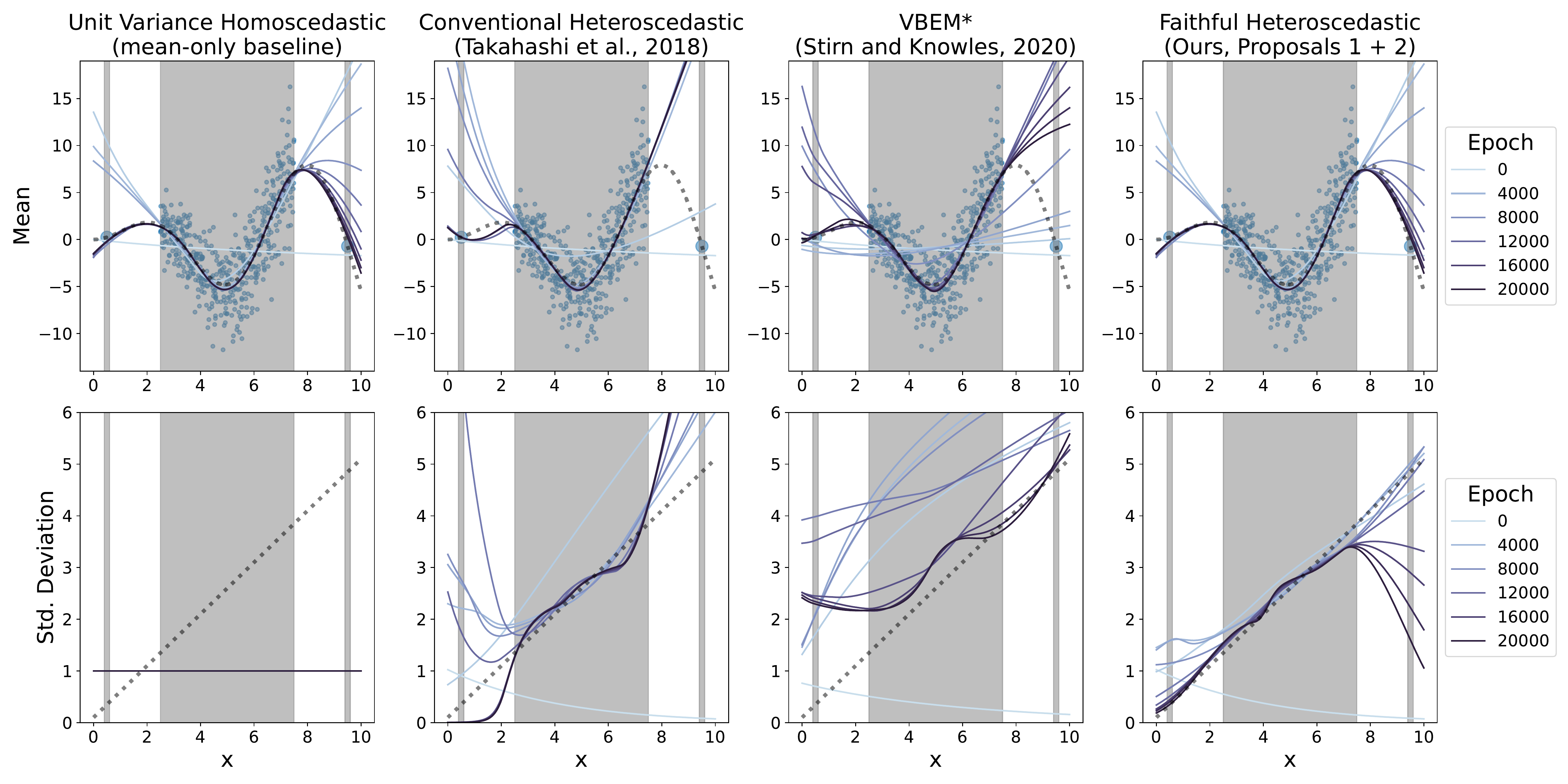}
    \caption{Convergence Behavior of Student Methods}
    \label{fig:covergence-student}
\end{figure*}

\begin{table*}[h!]
    \centering
    \caption{UCI Regression Performance of Student Methods}
    \label{tab:uci-student}
    \adjustbox{width=\textwidth}{\begin{tabular}{l|ccc|ccc|ccc|ccc}
\toprule
 & \multicolumn{3}{|c}{Unit Variance Homoscedastic} & \multicolumn{3}{|c}{Conventional Heteroscedastic} & \multicolumn{3}{|c}{VBEM*} & \multicolumn{3}{|c}{Faithful Heteroscedastic} \\
Dataset  & RMSE & ECE & LL & RMSE & ECE & LL & RMSE & ECE & LL & RMSE & ECE & LL \\
\midrule
boston & 0.304 & 0.168 & -0.959 & \sout{0.387} & \sout{0.017} & \sout{-1.05} & \textbf{0.339} & \textbf{0.0137} & \textbf{-0.311} & \textbf{0.304} & 0.026 & -1.63 \\
carbon & 0.0489 & 0.279 & -2.74 & \textbf{0.0488} & 0.000339 & \textbf{12.6} & \textbf{0.0488} & 0.0054 & \textbf{9.01} & 0.0489 & \textbf{0.000134} & \textbf{11.8} \\
concrete & 0.265 & 0.127 & -0.947 & \sout{0.322} & \sout{0.0151} & \sout{-2.53} & \textbf{0.27} & \textbf{0.0134} & \textbf{-0.22} & \textbf{0.265} & 0.0306 & \textbf{-0.93} \\
energy & 0.16 & 0.268 & -1.84 & \sout{0.196} & \sout{0.00164} & \sout{4.48} & \sout{0.185} & \sout{0.0228} & \sout{2.96} & \textbf{0.16} & \textbf{0.00104} & \textbf{3.94} \\
naval & 0.0247 & 0.271 & -1.82 & \sout{0.521} & \sout{0.000101} & \sout{4.69} & \sout{0.0482} & \sout{0.0034} & \sout{6.34} & \textbf{0.0247} & \textbf{0.000171} & \textbf{7.04} \\
power plant & 0.22 & 0.0105 & -0.936 & \sout{0.227} & \sout{0.000179} & \sout{0.15} & \sout{0.221} & \sout{0.000177} & \sout{0.122} & \textbf{0.22} & \textbf{0.000152} & \textbf{0.154} \\
protein & 0.00257 & 0.49 & -0.911 & \sout{0.0287} & \sout{0.000721} & \sout{4} & \sout{0.00383} & \sout{0.00429} & \sout{4.38} & \textbf{0.00257} & \textbf{0.00137} & \textbf{4.94} \\
superconductivity & 0.326 & 0.0104 & -0.966 & \sout{0.397} & \sout{0.00153} & \sout{0.16} & \sout{0.347} & \sout{0.0011} & \sout{0.132} & \textbf{0.326} & \textbf{0.000993} & \textbf{0.02} \\
wine-red & 0.769 & 0.00601 & -1.21 & \sout{0.782} & \sout{0.00249} & \sout{-1.36} & \textbf{0.771} & 0.00567 & \textbf{-1.16} & \textbf{0.769} & \textbf{0.00247} & -1.19 \\
wine-white & 0.777 & 0.0015 & -1.22 & \sout{0.783} & \sout{0.000786} & \sout{-1.29} & \textbf{0.777} & 0.000711 & \textbf{-1.16} & \textbf{0.777} & \textbf{0.00041} & -1.17 \\
yacht & 0.0226 & 0.375 & -0.912 & \sout{0.677} & \sout{0.0603} & \sout{-0.689} & \textbf{0.0193} & 0.103 & \textbf{2.42} & 0.0226 & \textbf{0.0161} & 2.29 \\
\textit{{Total wins or ties}} & -- & -- & -- & \textit{1} & \textit{0} & \textit{1} & \textit{6} & \textit{2} & \textit{6} & \textit{9} & \textit{9} & \textit{7} \\
\bottomrule
\end{tabular}
}
\end{table*}

\section{EXPERIMENT DETAILS}\label{sec:experiment-details}
In the following subsections, we provide additional details needed to reproduce our experiments.
To most exactly reproduce our results, please see or use our provided code.
By default, our code uses the same random number seeds that we used and enables GPU determinism.
In all experiments, we assume diagonal covariance and parameterize its standard deviation, that is we replace (co)variance function $f_\Sigma$ with diagonal standard deviation function $f_\sigma$.

\subsection{Convergence Experiment Details}\label{subsec:convergence-details}
Supplement \cref{tab:arch-convergence} has the neural network architecture used to generate figure 2 in our main manuscript.
Every method fits the same 500 data points, which fully comprise a batch, and submits their objective to Adam~\citep{kingma2014adam} with a 1e-3 learning rate for 20,000 epochs.
Our main manuscript contains the remaining implementation details.
\begin{table}[ht!]
    \centering
    \caption{Neural Network Architecture: Convergence Experiment}
    \begin{tabular}{lll}
    \toprule
    \multicolumn{1}{c}{$f_\text{trunk}$} & \multicolumn{1}{c}{$f_\mu$} & \multicolumn{1}{c}{$f_\sigma$} \\
    \midrule
    Dense (50 elu units) & Dense (1 linear unit) & Dense (1 softplus unit)\\
    \bottomrule
    \end{tabular}\label{tab:arch-convergence}
\end{table}

\subsection{UCI Experiment Details}\label{subsec:uci-details}
Supplement \cref{tab:arch-uci} has the neural network architecture used in section 3.2 of our main manuscript.
We use Adam~\citep{kingma2014adam} with a 1e-3 learning rate for a maximum of 60k epochs.
Because the UCI datasets are relatively small, every batch contains the entire training set.
We stop training early if the validation root mean square error (RMSE) has not improved for 100 epochs.
We always restore the model weights from the epoch with the best RMSE.
We monitor RMSE instead of NLL in order to give every model the best chance of being faithful.

\begin{table}[ht!]
    \centering
    \caption{Neural Network Architecture: UCI Experiments}
    \begin{tabular}{lll}
    \toprule
    \multicolumn{1}{c}{$f_\text{trunk}$} & \multicolumn{1}{c}{$f_\mu$} & \multicolumn{1}{c}{$f_\sigma$} \\
    \midrule
    Dense (50 elu units) & Dense ($\dim(Y)$ linear units) & Dense ($\dim(Y)$ softplus units)\\
    Dense (50 elu units) \\
    \bottomrule
    \end{tabular}\label{tab:arch-uci}
\end{table}

\subsection{VAE Experiment Details}\label{subsec:vae-details}
Supplement \cref{tab:arch-vae} has the neural network architecture used in section 3.3 of our main manuscript.
The encoder, $f_\text{trunk}$, parameterizes a $\dim(z)=16$ latent embedding.
We use Adam~\citep{kingma2014adam} with a 1e-3 learning rate for a maximum of 2k epochs with a 2048 batch size.
We monitor RMSE, perform early stopping, and weight restoration as discussed in supplement \cref{subsec:uci-details}.
We rotate our noise template by $\frac{2\pi}{\text{num. classes.}}(\text{class label})$ radians.
The training/validation split assignments are those provided by the TensorFlow Datasets API.

\begin{table}[ht!]
    \centering
    \caption{Neural Network Architecture: VAE Experiments}
    \begin{tabular}{lll}
    \toprule
    \multicolumn{1}{c}{$f_\text{trunk}$} & \multicolumn{1}{c}{$f_\mu$} & \multicolumn{1}{c}{$f_\sigma$} \\
    \midrule
    Conv ($32\times5\times5$, stride 2, elu) & Dense (128 elu units) & Dense (128 elu units)\\
    Conv ($32\times5\times5$, stride 2, elu) & Dense (1568 elu units) & Dense (1568 elu units)\\
    Dense (128 elu units) & ConvT ($32\times5\times5$, stride 2, elu) & ConvT ($32\times5\times5$, stride 2, elu)\\
    Dense (16 linear units, 16 softplus units) & ConvT ($1\times5\times5$, stride 2, linear) & ConvT ($1\times5\times5$, stride 2, softplus)\\
    Sampling Layer\\
    \bottomrule
    \end{tabular}\label{tab:arch-vae}
\end{table}

\subsection{CRISPR-Cas13 Experiment Details}\label{subsec:crispr-details}
Supplement \cref{tab:arch-crispr} has the neural network architecture used in section 3.4 of our main manuscript.
We use Adam~\citep{kingma2014adam} with a 1e-3 learning rate for a maximum of 2k epochs with a 2048 batch size.
We monitor RMSE, perform early stopping, and weight restoration as discussed in supplement \cref{subsec:uci-details}.
For each dataset, we have either two or three replicate measurements for computing the mean (i.e.~de-noised) efficacy scores.
We use 10k randomly selected training points as the background to compute SHAP values~\citep{lundberg_unified_2017} for a validation fold.
If we have fewer than 10k training points, we use the entire training set.
Their \href{https://github.com/slundberg/shap#deep-learning-example-with-deepexplainer-tensorflowkeras-models}{online documentation} recommends using training points as the background.
For visual aesthetics, we add back a scaled version of the average model output over the background samples to the respective sequence SHAP values.
We scale this value by $\frac{1}{\text{sequence length}}$.

\begin{table}[ht!]
    \centering
    \caption{Neural Network Architecture: CRISPR-Cas13 Experiments}
    \begin{tabular}{lll}
    \toprule
    \multicolumn{1}{c}{$f_\text{trunk}$} & \multicolumn{1}{c}{$f_\mu$} & \multicolumn{1}{c}{$f_\sigma$} \\
    \midrule
    Conv ($64\times4$, stride 1, relu) & Dense (128 elu units) & Dense (128 elu units)\\
    Conv($64\times4$, stride 1, relu) & Dense (32 elu units) & Dense (32 elu units)\\
    MaxPool1D (pool 2, stride 1) & Dense (1 linear unit) & Dense (1 softplus unit)\\
    \bottomrule
    \end{tabular}\label{tab:arch-crispr}
\end{table}

\section{ADDITIONAL RESULTS}\label{sec:additional-results}
The following pages provide results supplementary to those from our main manuscript.

\subsection{Additional VAE Results}\label{subsec:vae-additional}
Supplement \cref{fig:vae-moments-mnist,fig:vae-noise-mnist} are larger versions of figures 3 and 4 from our main manuscript.
Supplement \cref{fig:vae-moments-fashion-mnist,fig:vae-noise-fashion-mnist} are the same but for the Fashion MNIST dataset.

\subsection{Additional CRISPR-Cas13 Results}\label{subsec:crispr-additional}
Supplement \cref{fig:crispr-flow-cytometry,fig:crispr-junction-targets,fig:crispr-off-target} contain the complete set of figures (those similar to figure 5 of our main manuscript) for all models and all CRISPR-Cas13 datasets.
The first two rows of every subplot are the average SHAP values of the estimated mean when trained on replicate and mean efficacy scores, respectively.
The next two rows are the same but for the average SHAP values of the estimated standard deviation.
The final row subtracts the average SHAP values of the estimated standard deviation when trained on means from those when trained on replicates and represents the SHAP values of the estimated square root of noise variance.

\newpage

\begin{figure}[ht!]
    \centering
    \includegraphics[width=\textwidth]{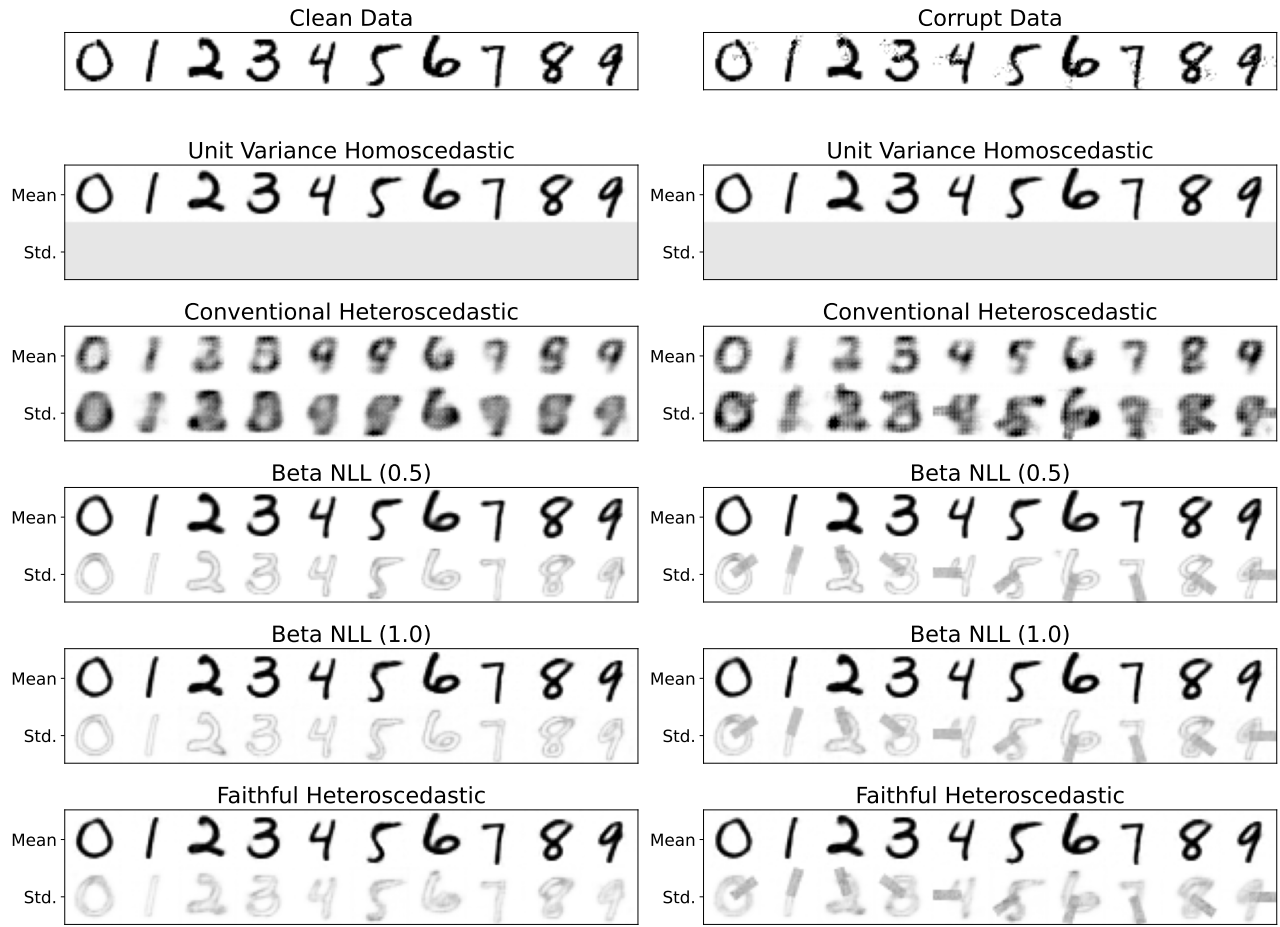}
    \caption{VAE Predictive Moments for MNIST}
    \label{fig:vae-moments-mnist}
\end{figure}

\begin{figure}[ht!]
    \centering
    \includegraphics[width=0.5\textwidth]{results/vae_Normal_noise_mnist}
    \caption{VAE Recovered Noise Variance for MNIST}
    \label{fig:vae-noise-mnist}
\end{figure}

\newpage

\begin{figure}[ht!]
    \centering
    \includegraphics[width=\textwidth]{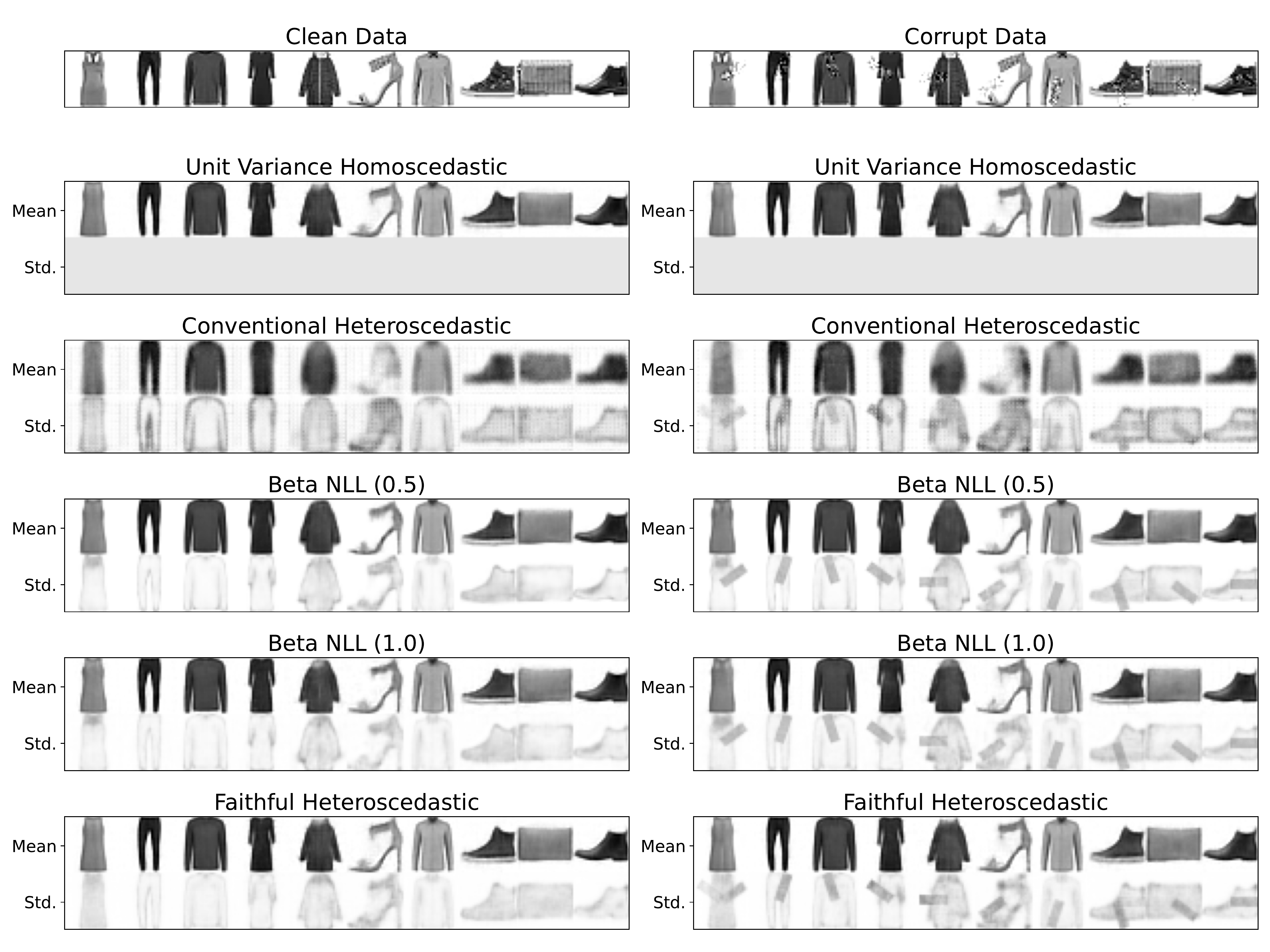}
    \caption{ VAE Predictive Moments for Fashion MNIST}
    \label{fig:vae-moments-fashion-mnist}
\end{figure}

\begin{figure}[ht!]
    \centering
    \includegraphics[width=0.5\textwidth]{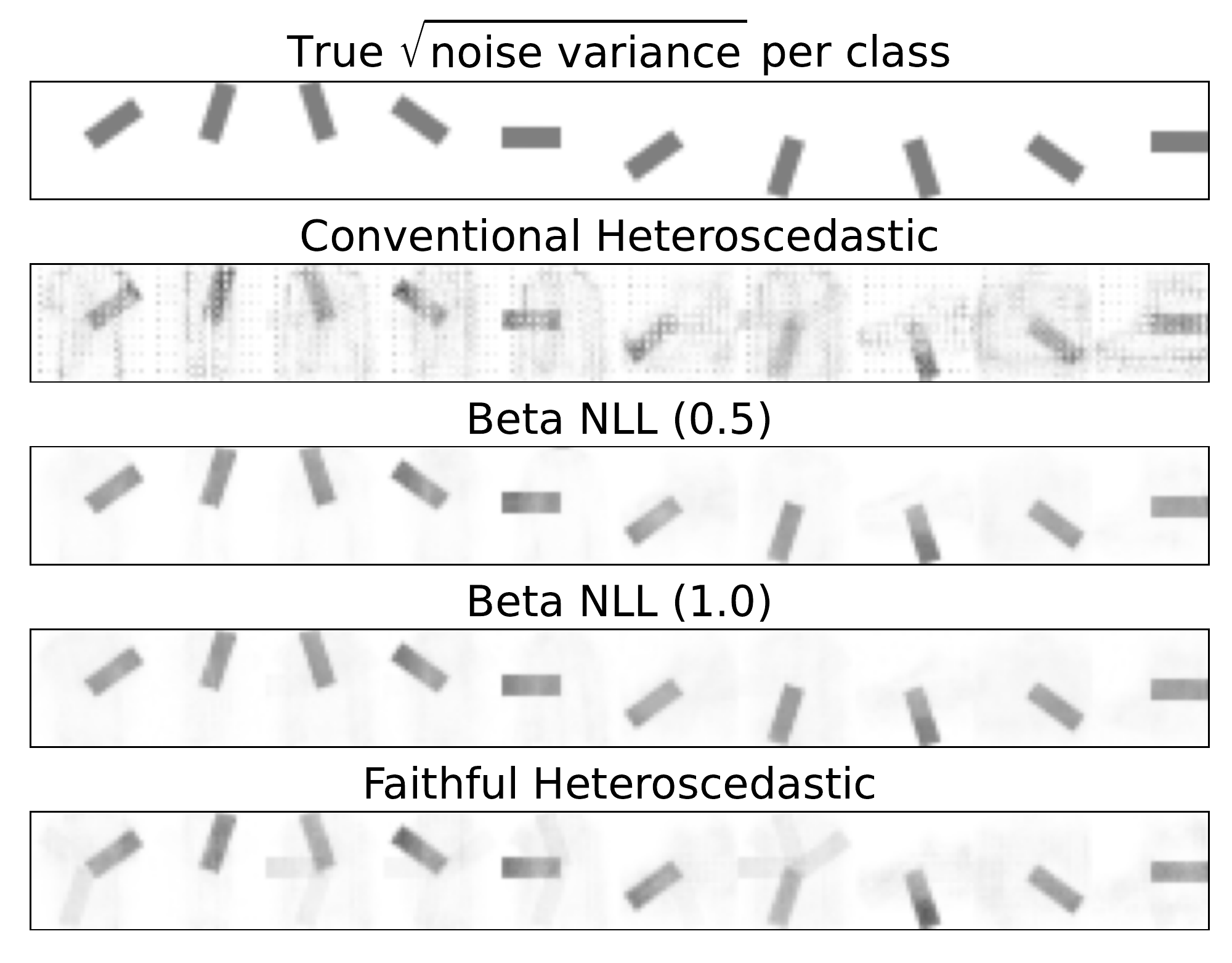}
    \caption{VAE Recovered Noise Variance for Fashion MNIST}
    \label{fig:vae-noise-fashion-mnist}
\end{figure}

\newpage

\begin{figure}[ht!]
    \centering
    \begin{subfigure}[b]{0.3\textwidth}
        \centering
        \includegraphics[width=\textwidth]{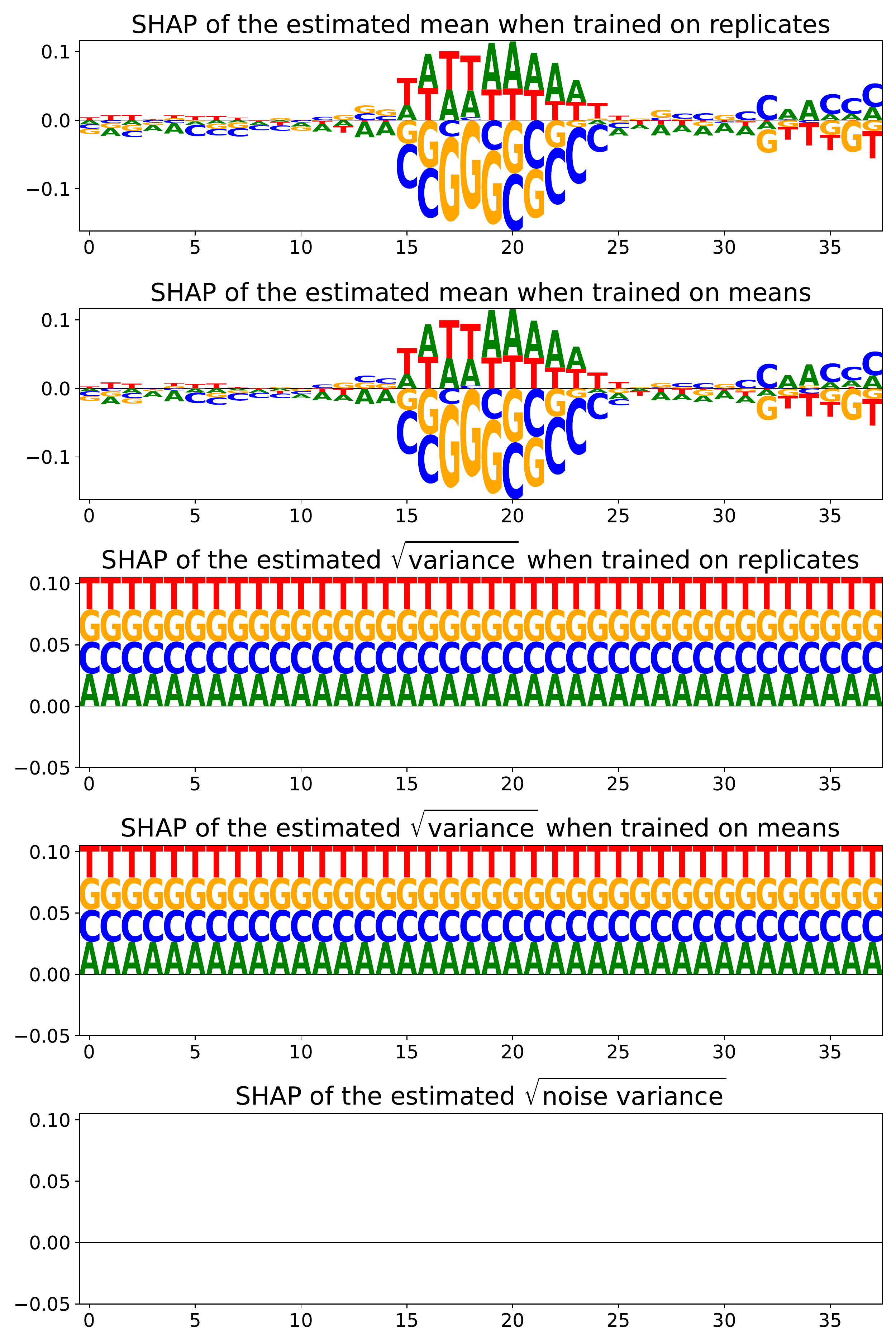}
        \caption{Unit Variance}
    \end{subfigure}
    \hfill
    \begin{subfigure}[b]{0.3\textwidth}
        \centering
        \includegraphics[width=\textwidth]{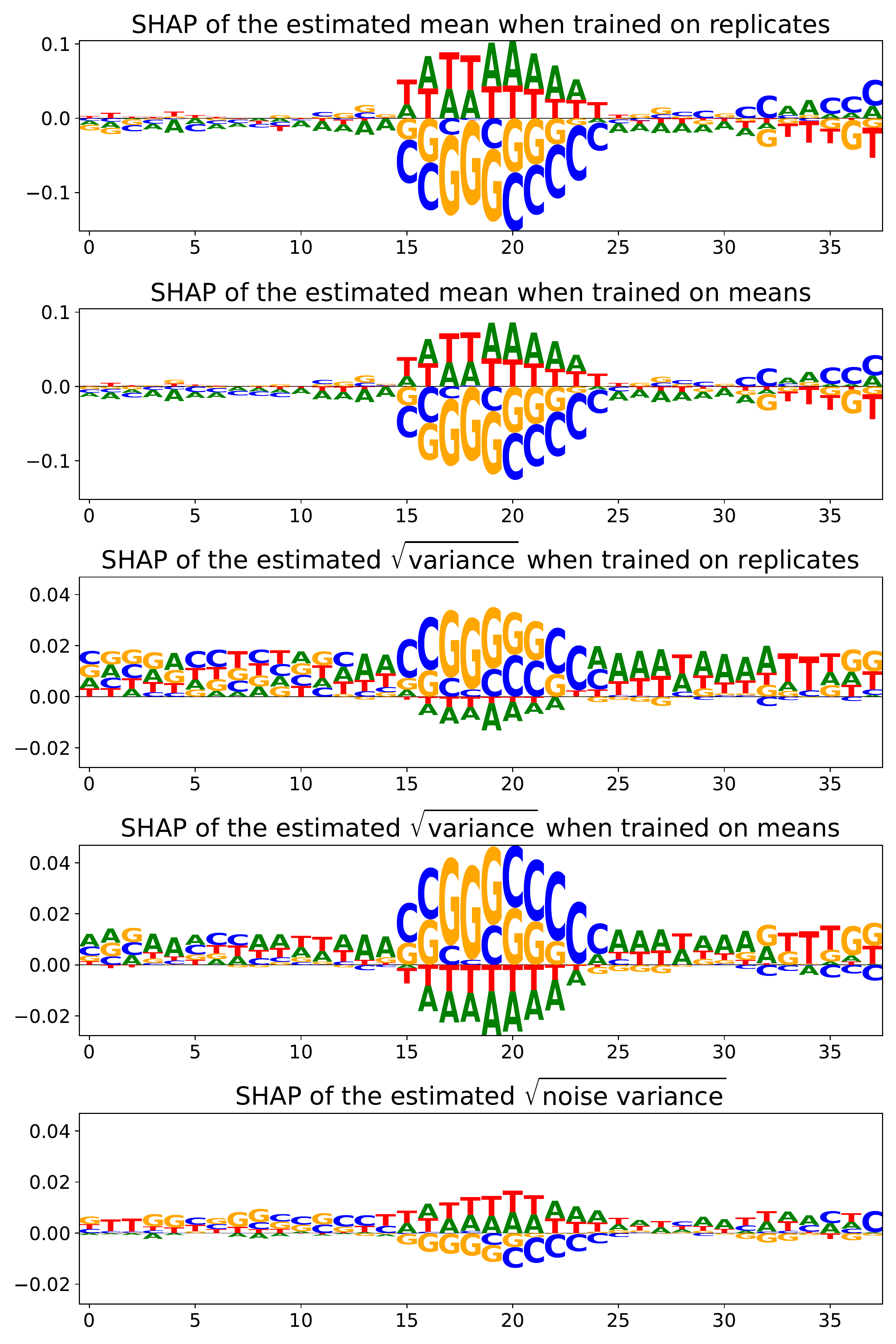}
        \caption{Heteroscedastic}
    \end{subfigure}
    \hfill
    \begin{subfigure}[b]{0.3\textwidth}
        \centering
        \includegraphics[width=\textwidth]{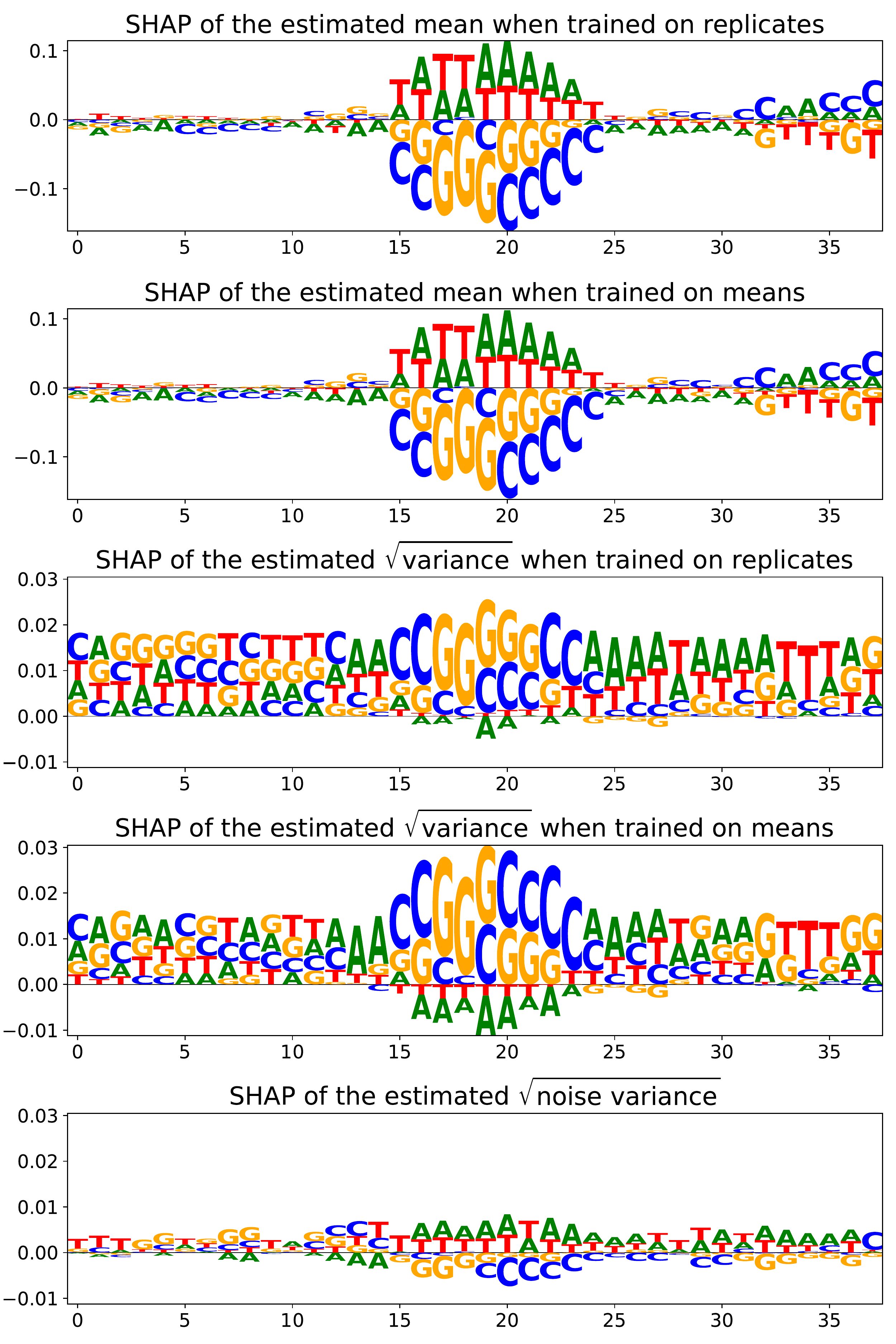}
        \caption{Beta NLL (0.5)}
    \end{subfigure}
    \\
     \begin{subfigure}[b]{0.3\textwidth}
        \centering
        \includegraphics[width=\textwidth]{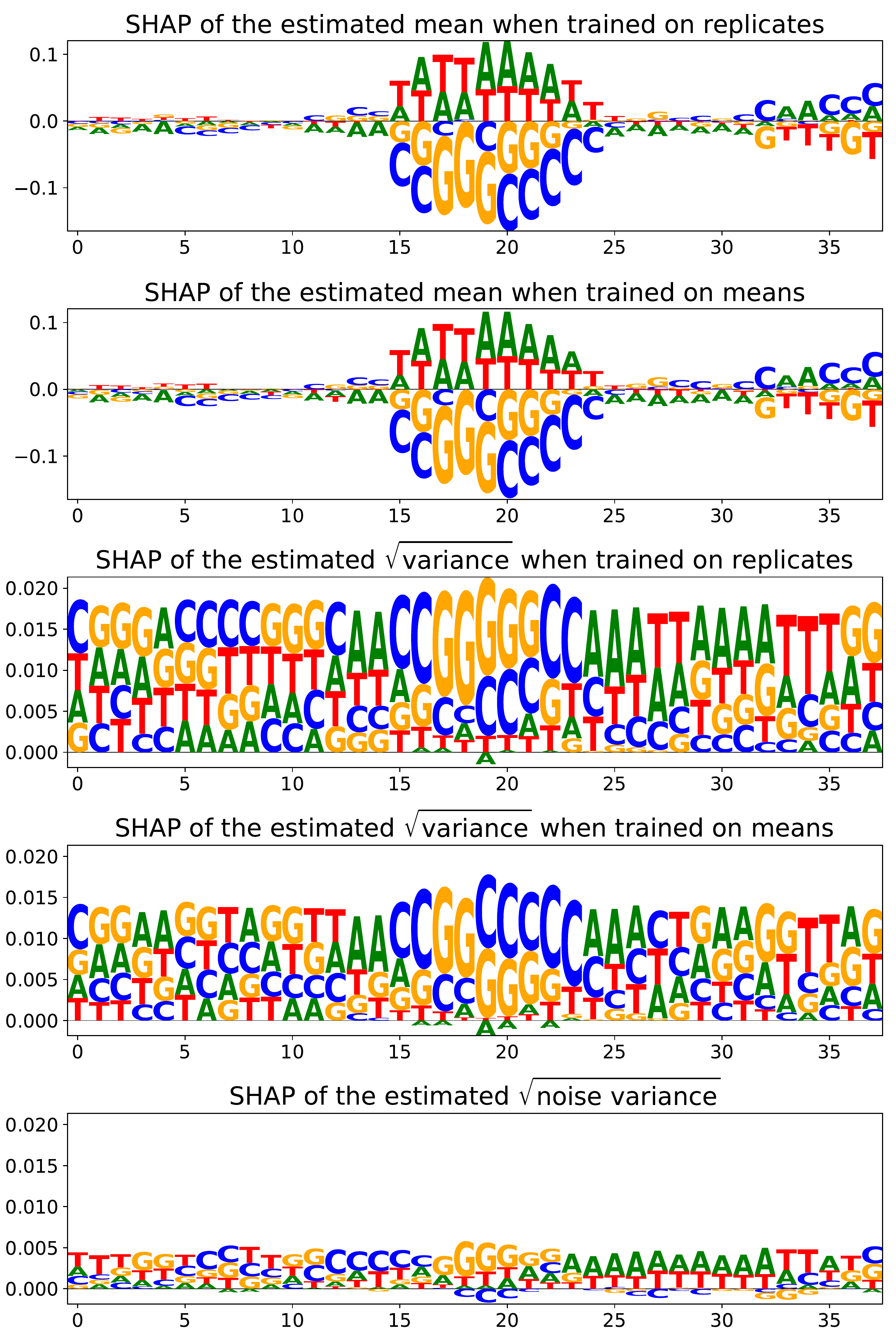}
        \caption{Beta NLL (1.0)}
    \end{subfigure}
    \vspace{20pt}
    \begin{subfigure}[b]{0.3\textwidth}
        \centering
        \includegraphics[width=\textwidth]{results/crispr_Normal_flow-cytometry_FaithfulHeteroscedastic}
        \caption{Faithful Heteroscedastic}
    \end{subfigure}
    \caption{SHAP Values for Flow Cytometry (HEK293) Dataset}
    \label{fig:crispr-flow-cytometry}
\end{figure}

\newpage

\begin{figure}[ht!]
    \centering
    \begin{subfigure}[b]{0.3\textwidth}
        \centering
        \includegraphics[width=\textwidth]{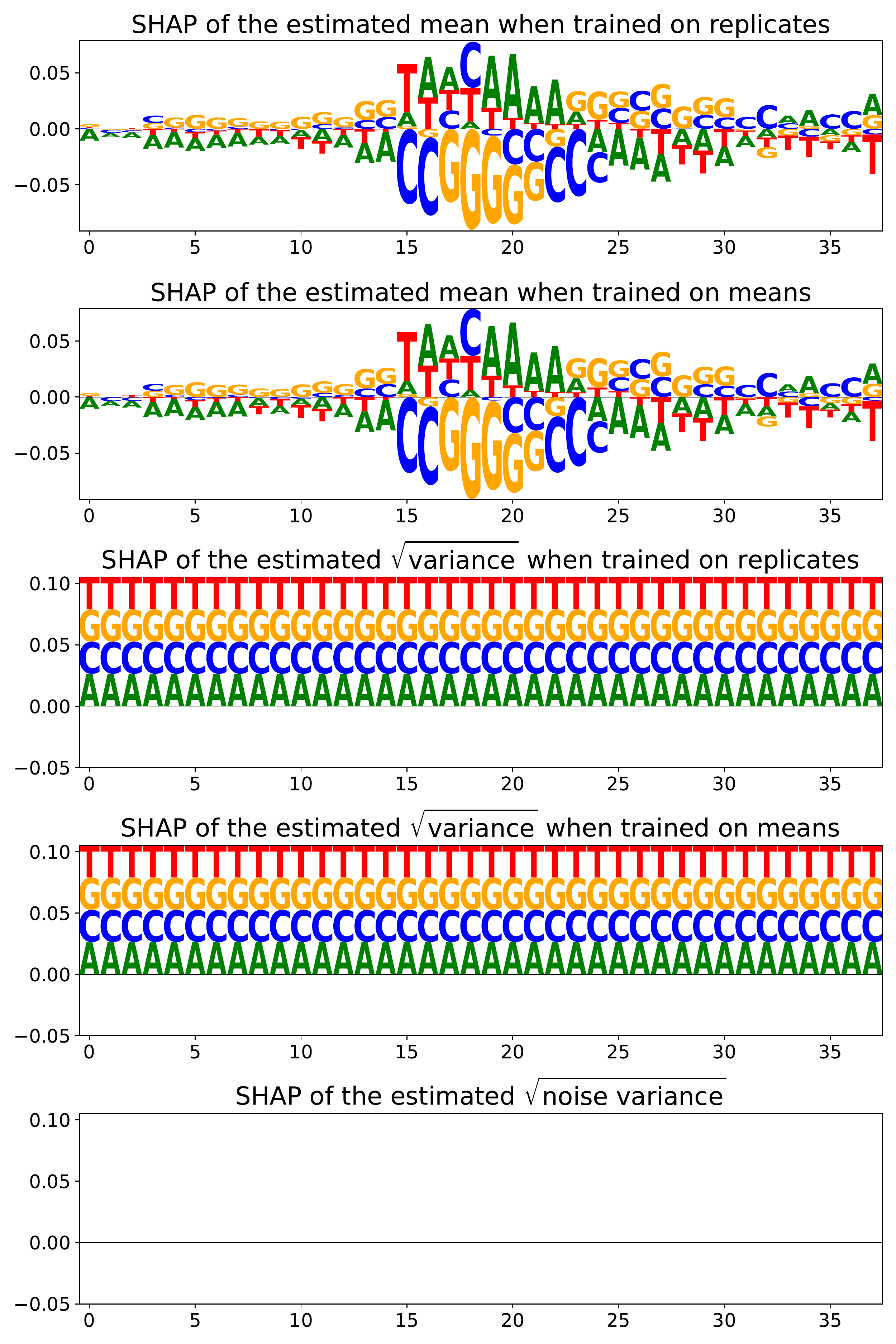}
        \caption{Unit Variance}
    \end{subfigure}
    \hfill
    \begin{subfigure}[b]{0.3\textwidth}
        \centering
        \includegraphics[width=\textwidth]{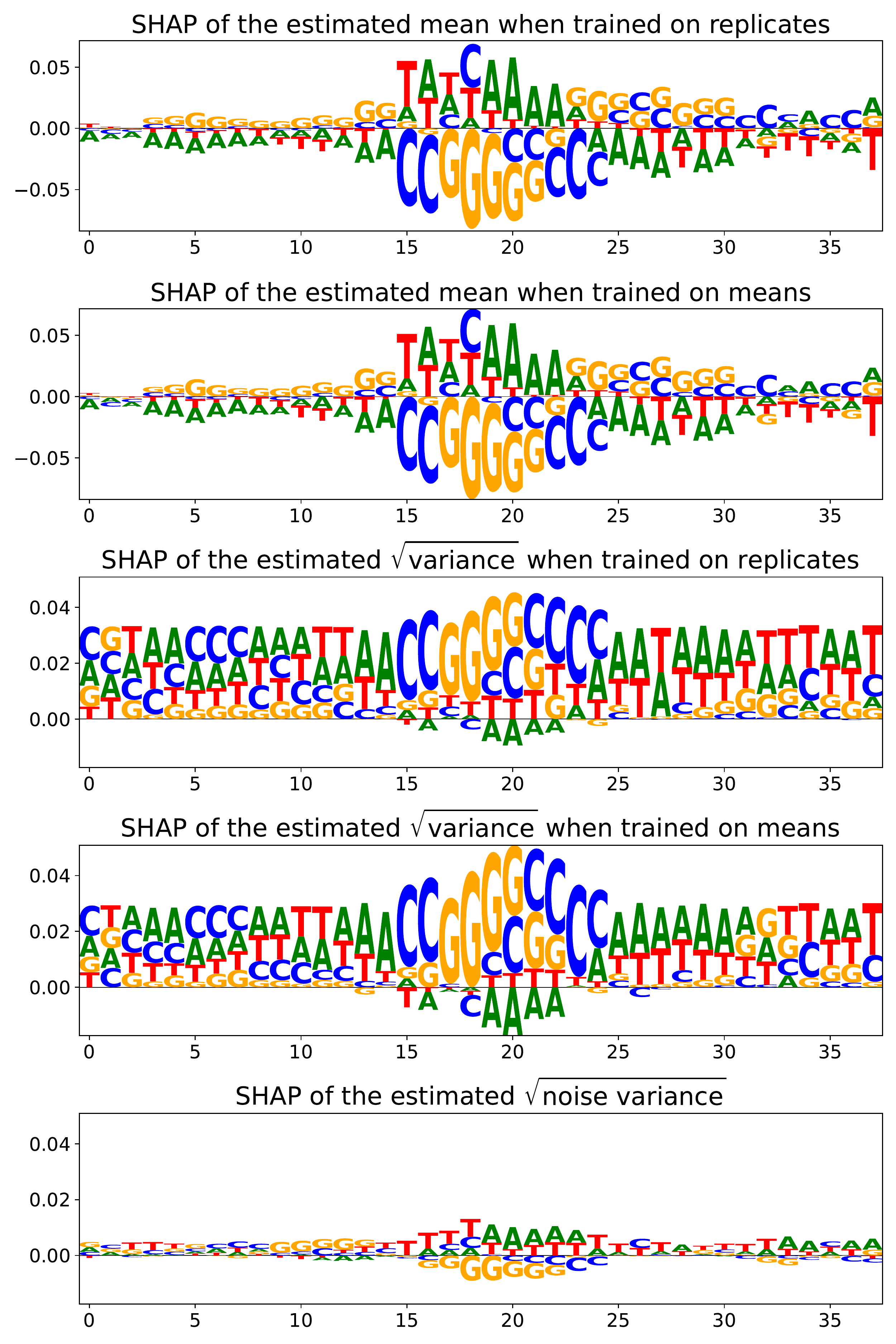}
        \caption{Heteroscedastic}
    \end{subfigure}
    \hfill
    \begin{subfigure}[b]{0.3\textwidth}
        \centering
        \includegraphics[width=\textwidth]{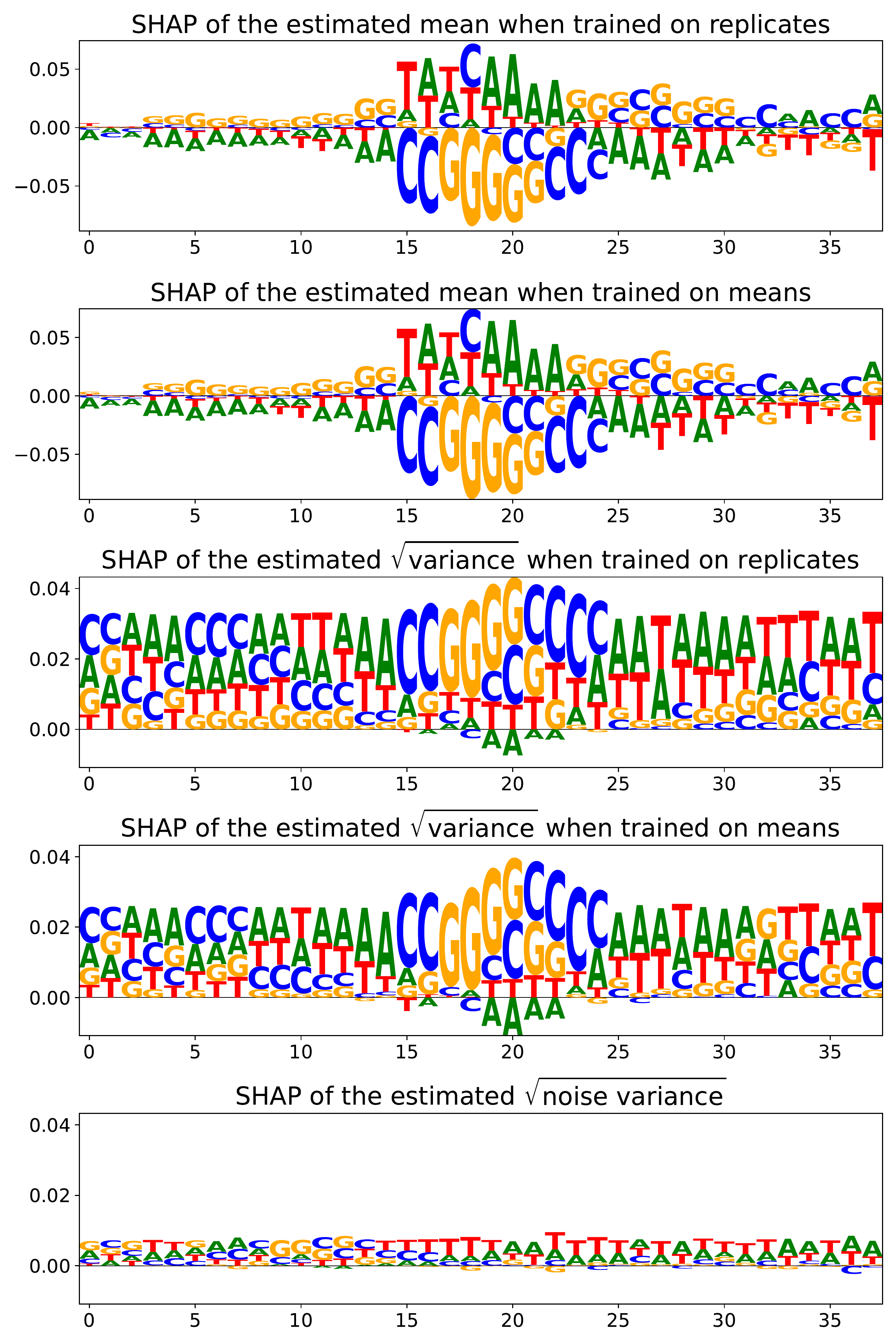}
        \caption{Beta NLL (0.5)}
    \end{subfigure}
    \\
     \begin{subfigure}[b]{0.3\textwidth}
        \centering
        \includegraphics[width=\textwidth]{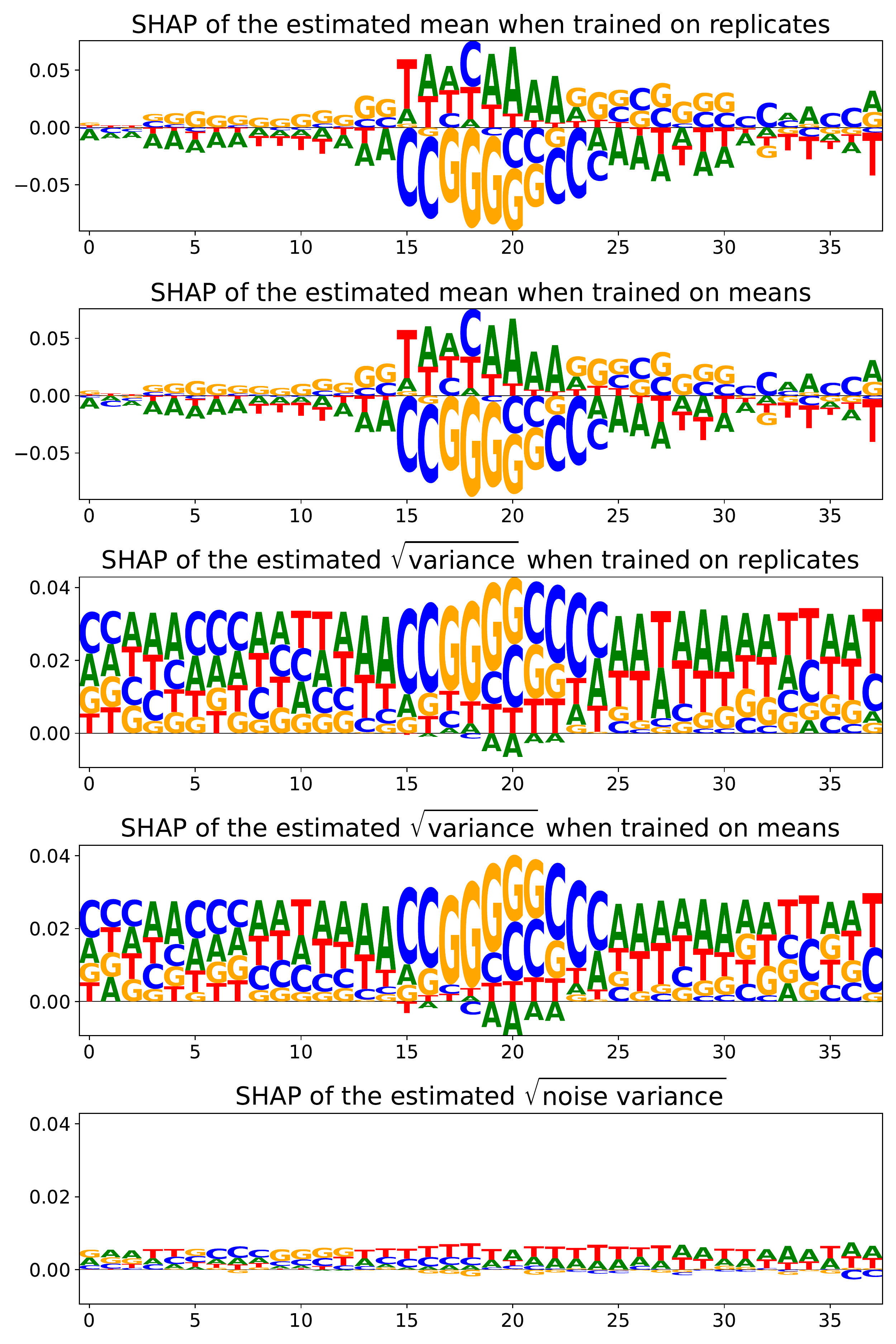}
        \caption{Beta NLL (1.0)}
    \end{subfigure}
    \vspace{20pt}
    \begin{subfigure}[b]{0.3\textwidth}
        \centering
        \includegraphics[width=\textwidth]{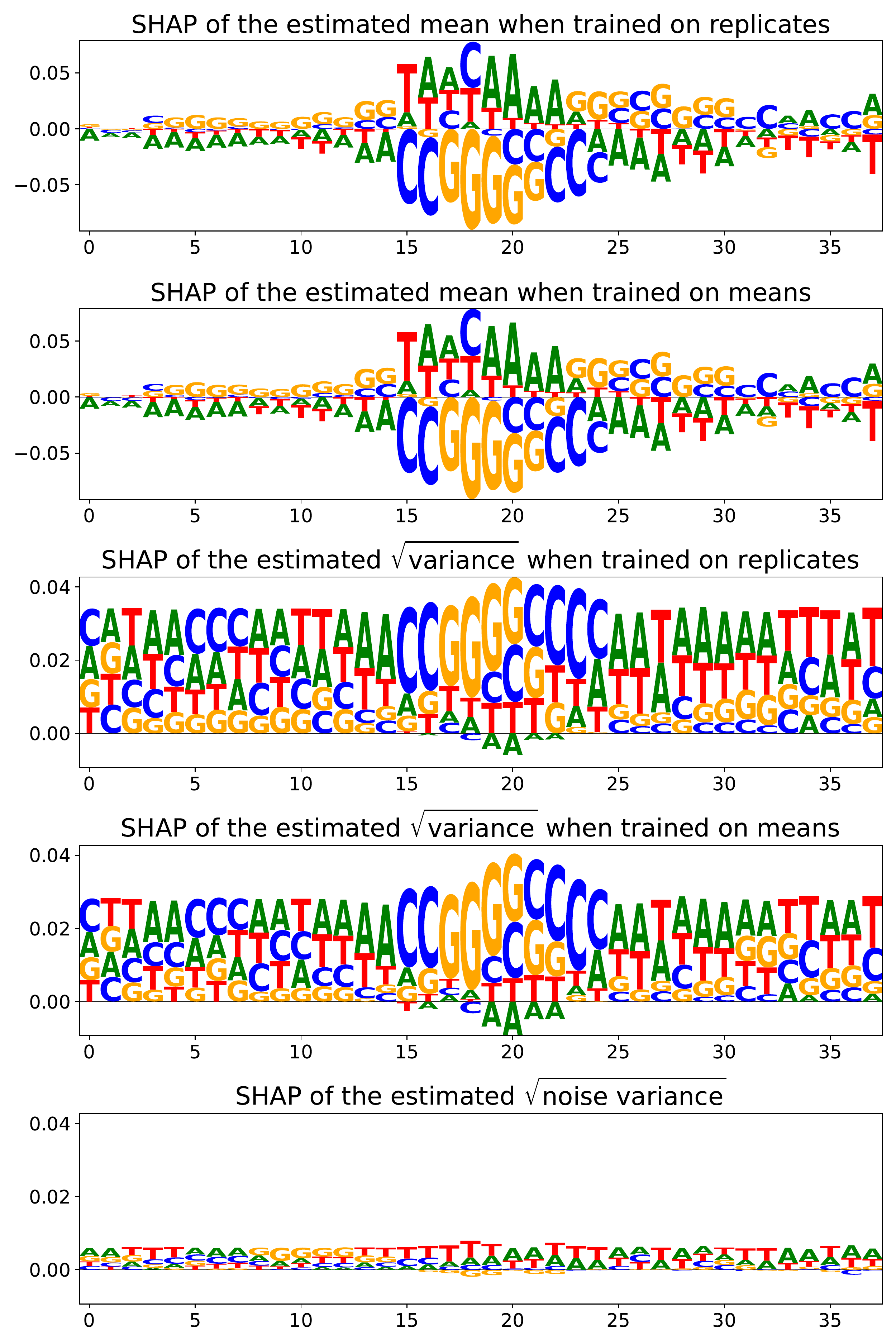}
        \caption{Faithful Heteroscedastic}
    \end{subfigure}
    \caption{SHAP Values for Survival Screen (A375) Dataset}
    \label{fig:crispr-junction-targets}
\end{figure}

\begin{figure}[ht!]
    \centering
    \begin{subfigure}[b]{0.3\textwidth}
        \centering
        \includegraphics[width=\textwidth]{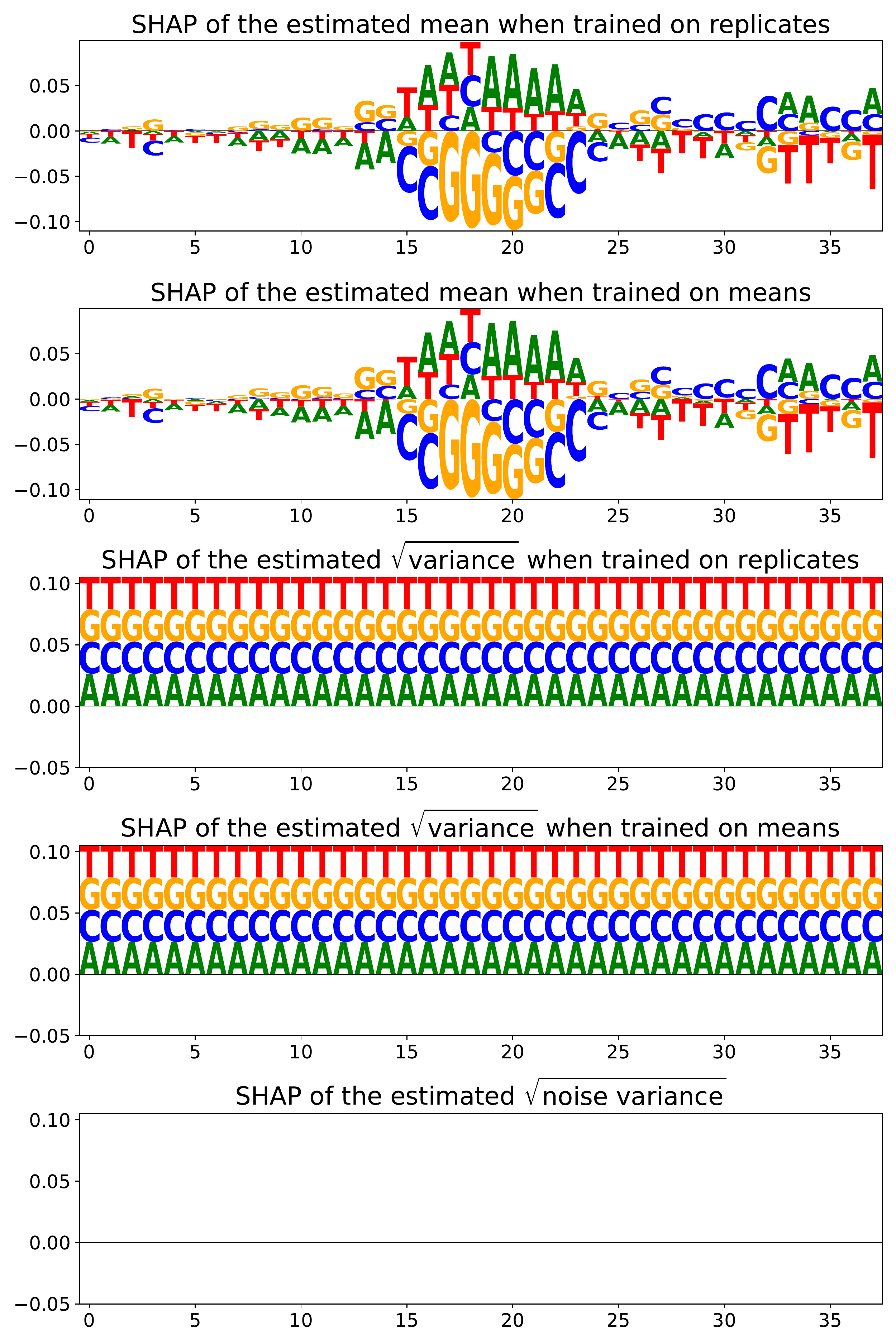}
        \caption{Unit Variance}
    \end{subfigure}
    \hfill
    \begin{subfigure}[b]{0.3\textwidth}
        \centering
        \includegraphics[width=\textwidth]{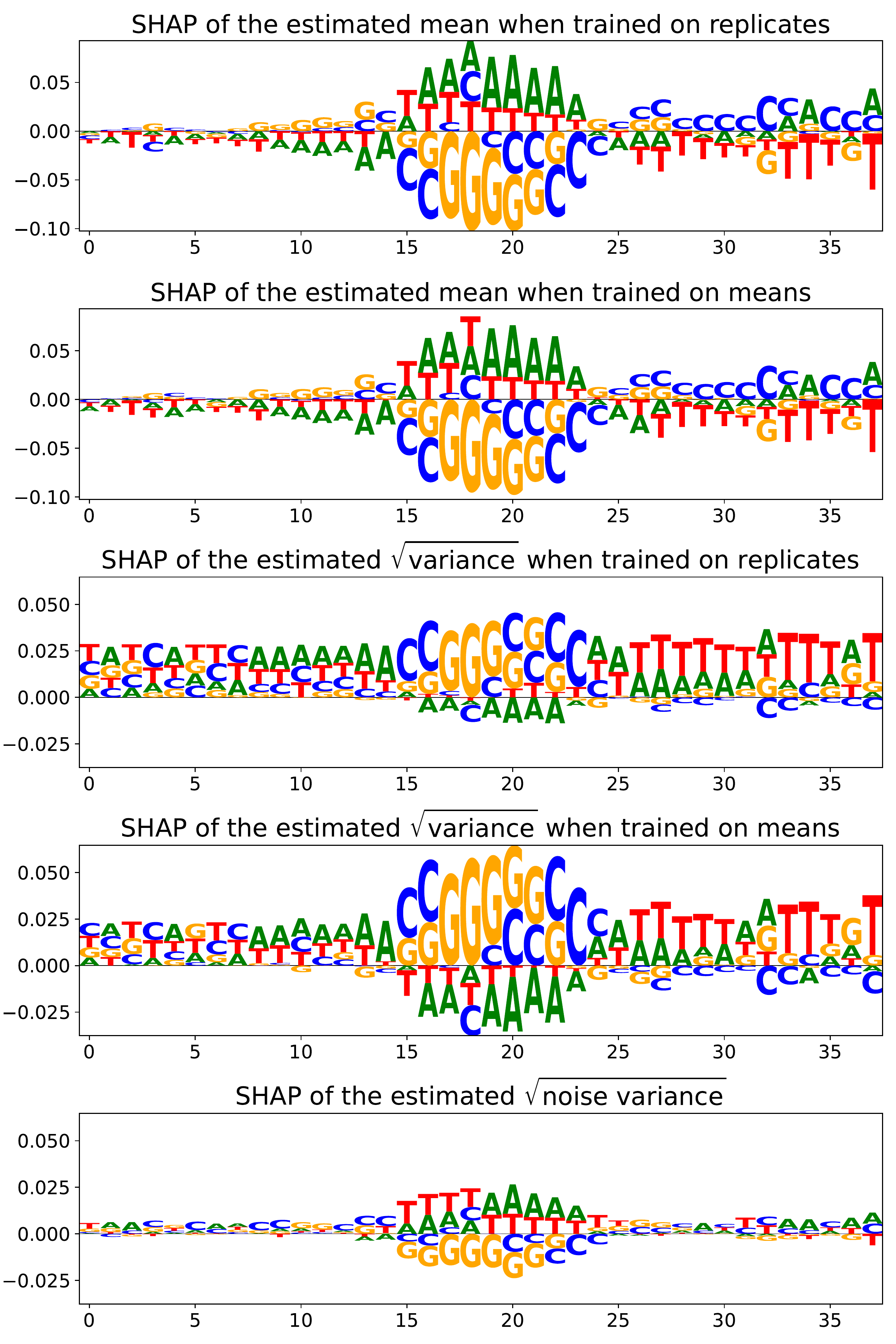}
        \caption{Heteroscedastic}
    \end{subfigure}
    \hfill
    \begin{subfigure}[b]{0.3\textwidth}
        \centering
        \includegraphics[width=\textwidth]{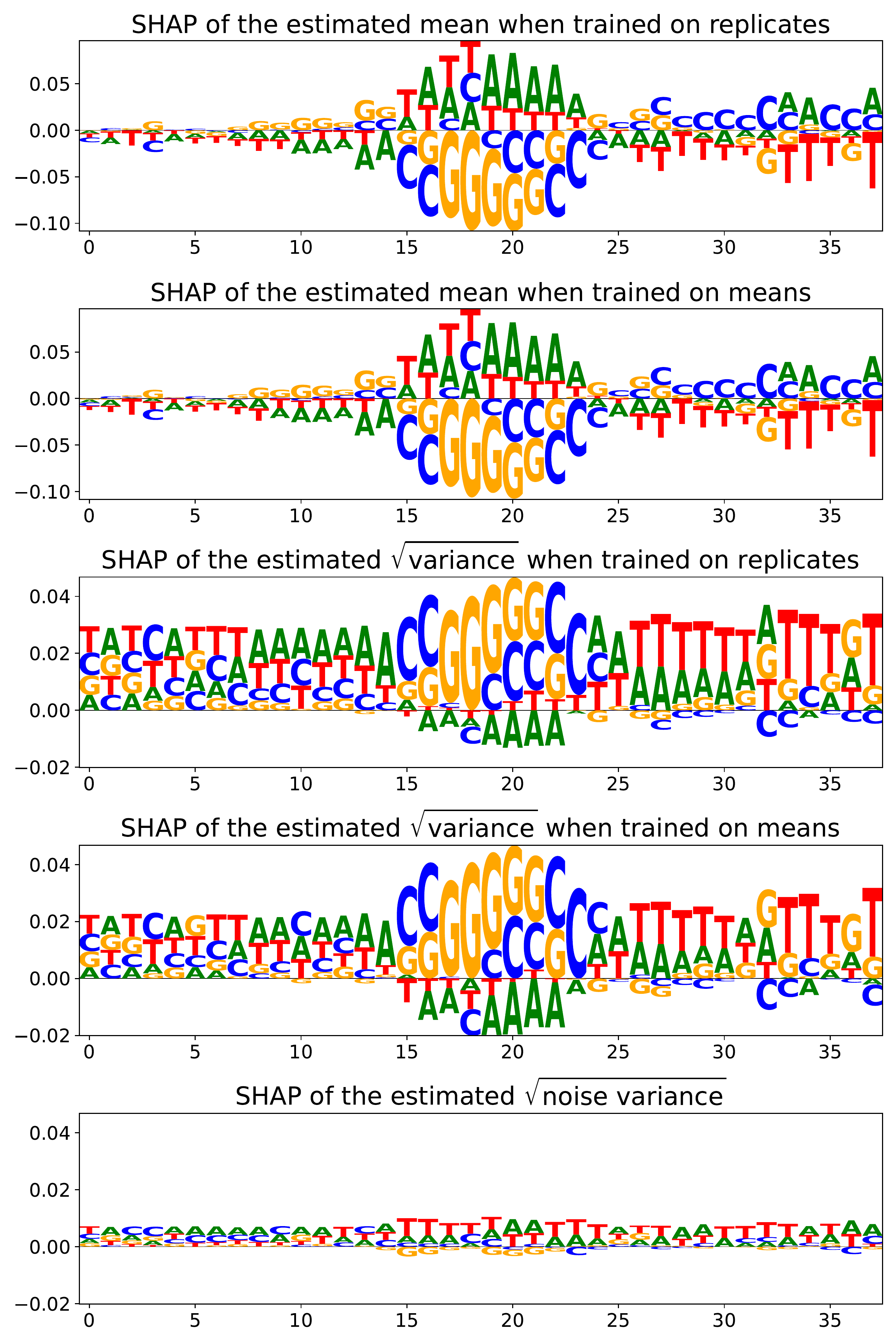}
        \caption{Beta NLL (0.5)}
    \end{subfigure}
    \\
     \begin{subfigure}[b]{0.3\textwidth}
        \centering
        \includegraphics[width=\textwidth]{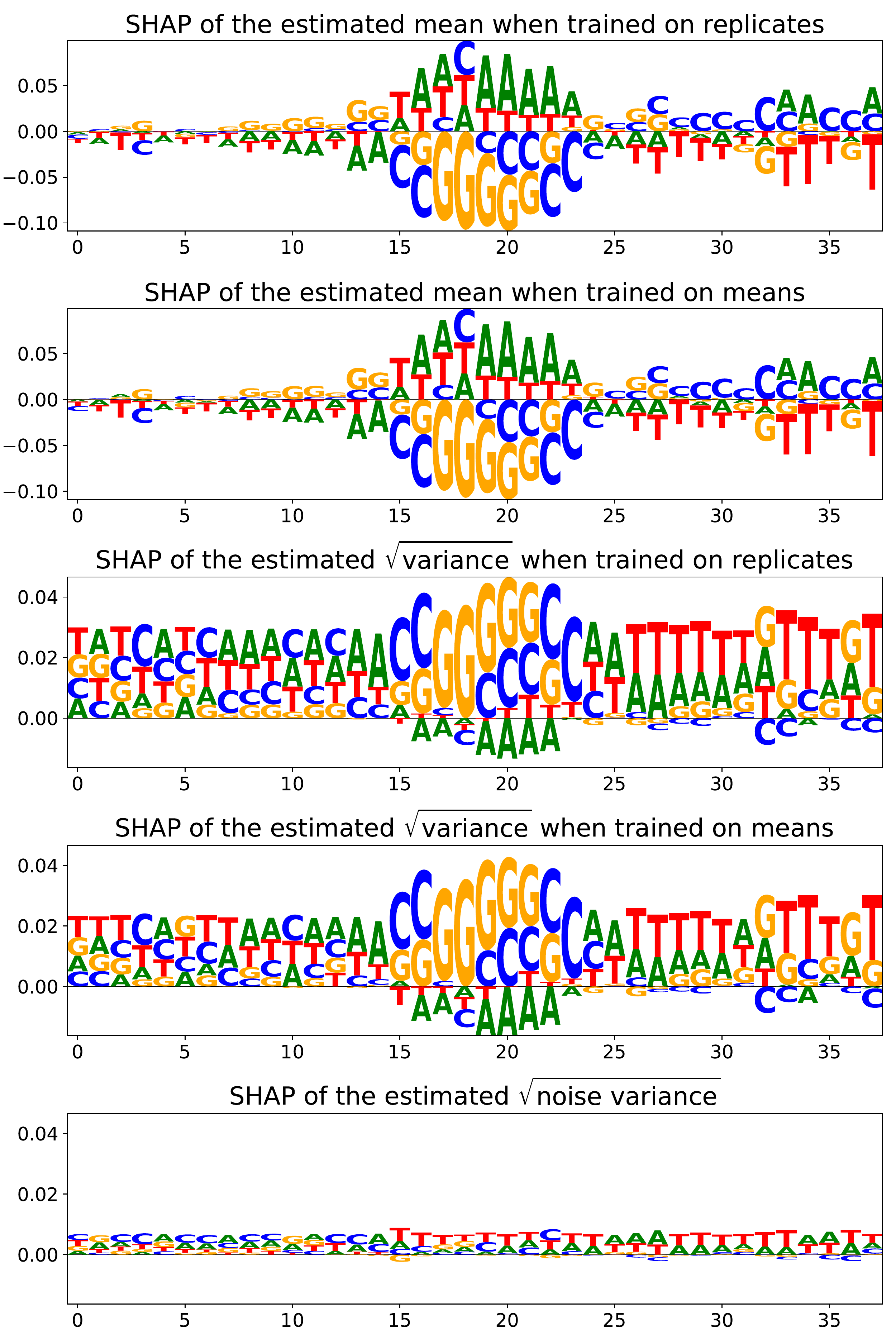}
        \caption{Beta NLL (1.0)}
    \end{subfigure}
    \vspace{20pt}
    \begin{subfigure}[b]{0.3\textwidth}
        \centering
        \includegraphics[width=\textwidth]{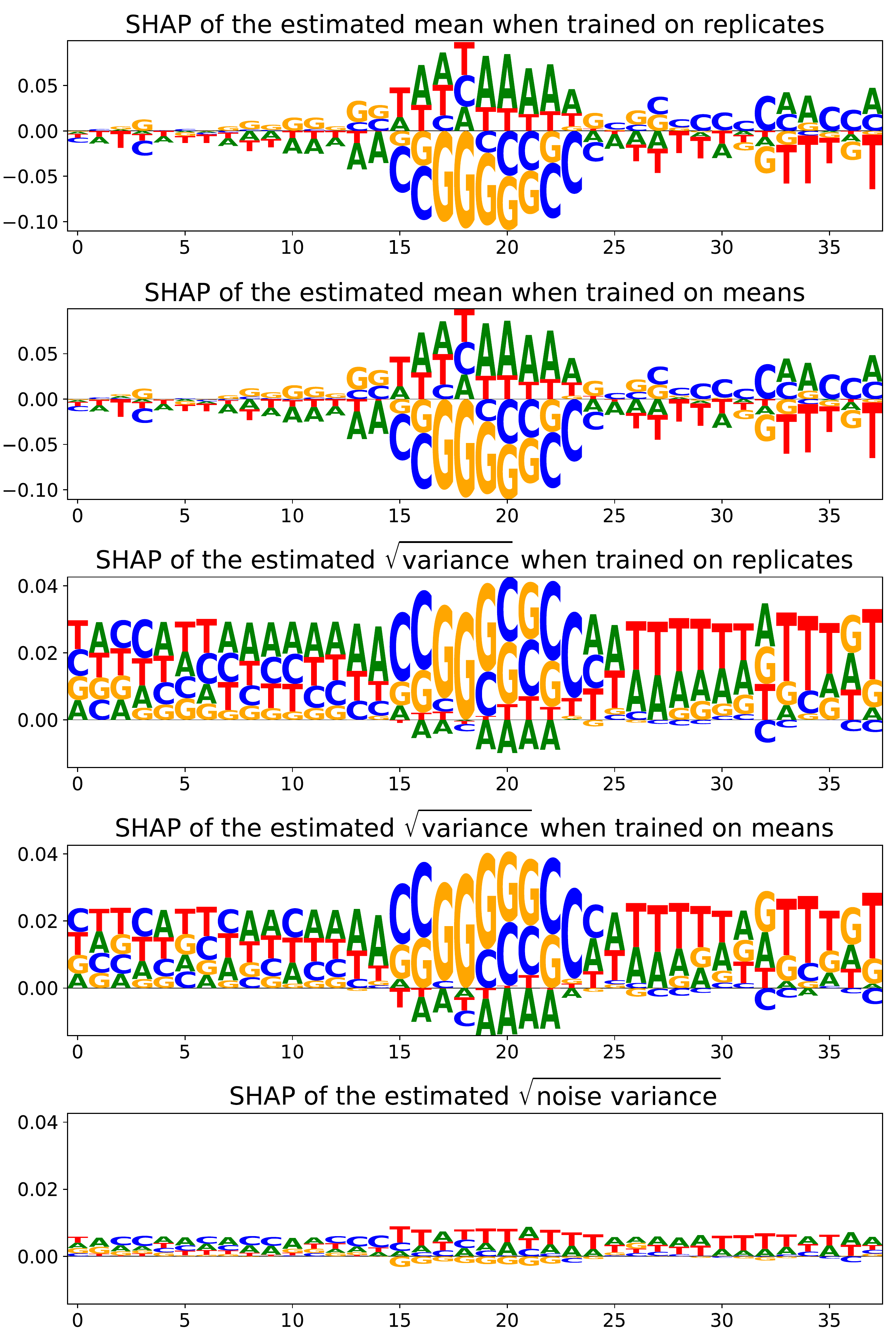}
        \caption{Faithful Heteroscedastic}
    \end{subfigure}
    \caption{SHAP Values for Survival Screen (HEK293) Dataset}
    \label{fig:crispr-off-target}
\end{figure}

}\fi

\end{document}